\documentclass[final,12pt]{colt2022} 



\usepackage{latexsym}
\usepackage{color}
\usepackage{graphics}
\usepackage{cancel}
\usepackage{mathtools}
\usepackage[mathscr]{euscript}
\usepackage{hyperref}
\usepackage{enumitem}

\usepackage{MnSymbol}
\DeclareMathAlphabet\mathbb{U}{msb}{m}{n}
\usepackage{xpatch}

\let\Pr\undefined

\DeclareMathOperator*{\Pr}{\mathbb{P}}

\DeclareMathOperator*{\E}{\mathbb E}
\DeclareMathOperator*{\poly}{poly}
\DeclareMathOperator*{\conv}{conv}
\DeclareMathOperator*{\argmax}{argmax}

\DeclareMathOperator{\nbr}{nbr}

\DeclareMathOperator{\Ind}{\mathbb{I}}

\newtheorem{question}{Question}

\newcommand{\eps}{\varepsilon}

\newcommand{\Reg}{\mathsf{Reg}}

\newcommand{\Val}{\mathsf{Val}}
\newcommand{\ConVal}{\mathsf{PerConVal}}
\newcommand{\CorrVal}{\mathsf{CorrVal}}
\newcommand{\SwapReg}{\mathsf{SwapReg}}
\newcommand{\LinSwapReg}{\mathsf{LinSwapReg}}
\newcommand{\PolySwapReg}{\mathsf{PolySwapReg}}
\newcommand{\BR}{\mathsf{BR}}

\newcommand{\bself}{\overline{0}}

\newcommand{\cA}{\mathcal{A}}

\newcommand{\cD}{\mathcal{D}}
\newcommand{\cF}{\mathcal{F}}

\newcommand{\cM}{\mathcal{M}}

\newcommand{\cP}{\mathcal{P}}

\newcommand{\cQ}{\mathcal{Q}}

\newcommand{\cV}{\mathcal{V}}

\newcommand{\ignore}[1]{}

\makeatletter
\newtheorem*{rep@theorem}{\rep@title}
\newcommand{\newreptheorem}[2]{%
\newenvironment{rep#1}[1]{%
 \def\rep@title{#2 \ref{##1}}%
 \begin{rep@theorem}}%
 {\end{rep@theorem}}}
\makeatother

\hypersetup{
  colorlinks   = true,
  urlcolor     = blue,
  linkcolor    = blue,
  citecolor    = blue
}

\newreptheorem{theorem}{Theorem}
\newreptheorem{lemma}{Lemma}
\newreptheorem{corollary}{Corollary}
\newreptheorem{proposition}{Proposition}

\title[Strategizing against Learners in Bayesian Games]{Strategizing against Learners in Bayesian Games}
\usepackage{times}



\coltauthor{%
 \Name{Yishay Mansour} \Email{mansour@google.com}\\
 \Name{Mehryar Mohri} \Email{mohri@google.com}\\
 \Name{Jon Schneider} \Email{jschnei@google.com}\\
 \Name{Balasubramanian Sivan} \Email{balusivan@google.com}\\
 \addr Google Research%
}

\begin{document}

\maketitle

\begin{abstract}%
  We study repeated two-player games where one of the players, the learner, employs a no-regret learning strategy, while the other, the optimizer, is a rational utility maximizer. We consider general Bayesian games, where the payoffs of both the optimizer and the learner could depend on the type, which is drawn from a publicly known distribution, but revealed privately to the learner. We address the following questions: (a) what is the bare minimum that the optimizer can guarantee to obtain regardless of the no-regret learning algorithm employed by the learner? (b) are there learning algorithms that cap the optimizer payoff at this minimum? (c) can these algorithms be implemented efficiently? While building this theory of optimizer-learner interactions, we define a new combinatorial notion of regret called polytope swap regret, that could be of independent interest in other settings.
\end{abstract}

\begin{keywords}%
Stackelberg value; swap regret; Bayesian games
\end{keywords}

\section{Introduction}
How should one play a two-player repeated game? A commonly employed strategy when dealing with a repeated setting is to use a no-regret learning algorithm. Such algorithms assign higher weight to actions that achieved good performance in previous rounds of the game. An important danger lurks when one uses a learning algorithm to play a repeated game: the opponent (who we will call the ``optimizer'') could be a rational utility-maximizer who might try to explicitly exploit the fact that the learning algorithm chooses its actions based on past performance. Can one design learning algorithms that do not get fooled because they learn from past actions? Can we precisely characterize the class of learning algorithms that are robust from being manipulated in this way? What are the meaningful outcomes and benchmarks when studying this optimizer-learner interaction?

Recent work by~\cite{deng2019strategizing} initiated the study of optimizer-learner interactions in general 2-player bimatrix games. They showed that regardless of, and without knowledge of, the specific no-regret-learning algorithm used by the learner, the optimizer can always guarantee himself at least the Stackelberg value\footnote{The Stackelberg variant of a two-player game is a one-shot two-stage game where the optimizer moves first and publicly commits to a (possibly mixed) strategy, and the learner then best responds to this strategy. The equilibrium that results in this two-stage game when both players play optimally is called a Stackelberg equilibrium. We note here that the optimizer's utility in a Stackelberg equilibrium is at least as high as the utility he can get in any (pure or mixed-strategy) equilibrium thereby showing that playing against a learner is more beneficial than playing against another optimizer (i.e., a rational utility maximizer). Formally defined in Section~\ref{sec:prelim}.} of the game by playing a static fixed strategy each round. 
More interestingly, they show that for a large class of no-regret learning algorithms called mean-based algorithms, there are games where the optimizer can badly mislead the learner and profit immensely by playing a \textit{dynamic strategy} that varies over time. In particular, the optimizer can architect situations where the learners' responses in certain rounds are far from their best response, owing to the force of memory that is inherent in these learning algorithms.  However, \cite{deng2019strategizing} also show that if the learner were to use a more sophisticated learning algorithm, namely, a no-swap-regret algorithm, then the optimizer is unable to get anything more than the utility he is able to get in the Stackelberg equilibrium of the game. I.e., a no-swap-regret algorithm is \emph{sufficient} to prevent the optimizer from benefiting from dynamic strategic behavior.

\paragraph{Questions.} In this paper, we primarily focus on two questions. First, the results of~\cite{deng2019strategizing} immediately motivate the following question: in general 2-player games (the same set of games studied in~\cite{deng2019strategizing}; we call these \emph{standard} games), is a no-swap-regret algorithm also \emph{necessary} for the learner to cap the optimizer's payoff of Stackelberg value? Or can the learner run algorithms that, despite having large swap regret, cap the optimizer's payoff at the Stackelberg value of the game? In other words, we seek to characterize the precise class of learning algorithms that ensure that the optimizer cannot benefit from dynamic strategic behavior. 

Second, we seek to develop the theory of optimizer-learner interaction in the significantly more general class of Bayesian games, and develop a complete understanding of the landscape there. These games arise naturally in various economic settings, for example an optimizer selling an item to a learner where the learner's private value for the item is their type \citep{braverman2018selling}. Formally, a Bayesian game begins with one of $C$ \textit{contexts} (``types'') $c \in [C]$ being drawn from a publicly known distribution $\cD$ with probability $p_c$ of outputting context $c$. This context $c$ is told to the learner but not to the optimizer. Based on the context $c$, the learner chooses an action $j \in [N]$; simultaneously, the optimizer chooses an action $i \in [N]$. The optimizer then receives utility $u_{O}(i, j, c)$, and the learner receives utility $u_{L}(i, j, c)$ -- note that we allow both utilities to depend on the context $c$. I.e., instead of a single bi-matrix in the \emph{standard} game, a Bayesian game can be thought of as specified by $C$ bi-matrices. 

As in standard games, it is straightforward to show that by playing a static fixed strategy, an optimizer can obtain at least the \emph{Bayesian Stackelberg} value of the game per round, as long as the learner runs a no-(contextual)-regret learning algorithm (we prove this in Lemma~\ref{lem:stack_achieve}). For the Bayesian setting, we would like to understand (a) are there learning algorithms that are robust to dynamic strategic behavior (that cap the optimizer payoff at this Stackelberg value)? in particular, what is the right generalization of swap regret? (b) can these algorithms be implemented efficiently? if not, what guarantees can we provide for efficient learning algorithms?

\subsection{Our Results} 

For the first question on standard games, we show that no-swap-regret algorithms are the precise class of algorithms that are robust against dynamic strategic behavior of the optimizer. Specifically, for any learning algorithm that has a swap-regret of $R$, we construct games where the optimizer \emph{can earn $R/2$ more than the Stackelberg value of the game}. In particular therefore, if the learner had, say, a linear swap regret, the optimizer would earn linearly more than the Stackelberg value of the game. The main novelty in the proof of this result lies in the game construction: the payoffs in the game we design should ensure a delicate balance between the optimizer's payoff being high, while the Stackelberg value not being too high. We present the precise details of the construction in Section~\ref{sec:standard}. Apart from completing the picture for standard games, this result also provides a new characterization of swap regret as a measure of robustness against strategic behavior. 

For Bayesian games, the challenges are multi-fold. Unlike standard games, it is not clear what the correct generalization of swap regret should be. Many seemingly natural choices of regret definition turn out to be incorrect. For example, motivated by the fact that running an independent low external regret algorithm per context results in low external regret, one may consider running an independent low swap regret algorithm for each context. But this does not work! In particular, there are simple games where an optimizer can earn linearly more than the Bayesian Stackelberg value by playing a dynamic strategy against such a learning algorithm (we give one example in Theorem~\ref{thm:alg1bad}).  

In this paper we provide a nuanced generalization of swap-regret to the Bayesian setting that we call \emph{polytope swap regret}. We prove that this notion of regret has the guarantee that any learner playing a learning algorithm with $o(T)$ polytope regret \emph{is guaranteed to asymptotically cap the optimizer payoff at the Bayesian Stackelberg value}. While we do not yet know that a low polytope-swap regret is necessary to cap the optimizer payoff at the Bayesian Stackelberg value (i.e., that it is tight in the same way swap regret is for standard games), we provide another generalization of swap regret called \emph{linear swap regret} such that \emph{a low linear swap regret is necessary to cap the optimizer payoff at the Bayesian Stackelberg value}. 

Polytope swap regret actually extends far beyond just the Bayesian setting, and can be thought of as a generalization of swap regret to the setting of online linear optimization. The idea behind polytope swap regret stems from viewing the learner's actions -- mappings from contexts to a distribution over actions -- as points in the polytope $\cP = \Delta([N])^{C} \subseteq \mathbb{R}^{N \times C}$. Our ``swap functions'' then allow the vertices of this polytope to be swapped with each other. Every point inside the polytope (including the learner's actions) can be written as a convex combination of the vertices of $\cP$, to which this swap function can be applied. Of course, there may be many ways to write a given point as a convex combination of vertices: we consider the most permissive definition of regret by choosing the decomposition that leads to the least regret in hindsight. I.e., we say that polytope swap regret is high only if every vertex decomposition of the learner's actions will generate high regret by applying a swap function to the vertices. Precise definitions are given in Section~\ref{sec:bayesian} and~\ref{sec:psr}. When we restrict these swap functions to be linear maps, we obtain the linear swap regret. Interestingly, when $\cP$ is the simplex $\Delta([N])$ both these notions of swap regret (polytope and linear), are equal (it is possible to implement any swap function on vertices via a linear map), and both reduce to the ordinary notion of swap regret.

These twin concepts, with polytope swap regret being sufficient and linear swap regret being necessary, raise the question of which of these could potentially be both necessary and sufficient. To answer this, we first show that these two concepts are not the same by separating polytope and linear swap regret (Theorem~\ref{thm:poly_lin_sep}). We then show in Theorem~\ref{thm:poly_lin_sep_bayesian} that linear swap regret is not a sufficient condition. We conjecture that polytope swap regret is the right notion that captures robustness against strategic behavior in Bayesian games, by being both necessary and sufficient. We leave the necessity of polytope swap regret as a concrete open question.

\paragraph{Efficient Algorithms for Bayesian Games.} A natural question is whether we can given efficient algorithms for these generalizations of swap regret. We address this issue in Appendix~\ref{sec:algcons}. In particular, we show how to construct a low polytope swap regret algorithm given any low swap regret algorithm as input, such that this algorithm incurs a regret of at most $O(\sqrt{TV\log V})$ and runs in time $O(\poly(V))$, where $V$ is the number of vertices of the polytope under consideration. For Bayesian games, the number of vertices of the polytope $\cP = \Delta([N])^{C}$ is $N^C$, and thus, in cases where the number of contexts $C$ is small, these bounds are manageable. When $C$ is large, we analyze two other algorithms: the ``low-swap-regret per context'' algorithm mentioned above, and a generalization of the external to internal regret reduction of \cite{BlumMansour2007}. While neither algorithm (provably) has low polytope regret, we show they both provide some robustness by capping the optimizer's payoff at some variant of the Bayesian Stackelberg value of the game. Finally, we conjecture this exponential dependence on $C$ is necessary to achieve low polytope regret -- we contribute one piece of evidence towards this by showing that the Bayesian Stackelberg value itself is APX-hard to compute in general Bayesian games (Theorem \ref{thm:apx-hardness}).

\subsection{Related Work}
While the introduction discusses the work closest to ours, namely~\cite{deng2019strategizing}, we discuss further related work in detail in Appendix~\ref{sec:app_related}. 

\section{Model and preliminaries}
\label{sec:prelim}

\paragraph{Notation} We write $[N]$ to denote the set $\{1, 2, \dots, N\}$. For a finite set $S$, we write $\Delta(S)$ to denote the set of distributions over $S$. We defer most proofs to Appendix \ref{app:omitted} for the sake of brevity.

\subsection{Games and equilibria}\label{sec:intro_games}

We begin this paper by considering finite bimatrix games (which we refer to as \textit{standard games}). A standard game is a game between two players, who we refer to as the \textit{optimizer} and the \textit{learner}. The optimizer must choose one of $M$ actions (labeled $1$ through $M$) and learner must simultaneously choose one of $N$ actions (labeled $1$ through $N$). If the optimizer chooses action $i \in [M]$ and the learner chooses action $j \in [N]$, then the optimizer receives utility $u_{O}(i, j)$ and the learner receives utility $u_{L}(i, j)$. We will assume all utilities are bounded in $[-1, 1]$ (so $|u_{O}(i, j)| \leq 1$, and $|u_{L}(i, j)| \leq 1$). To simplify analysis in the sections that follow, we will eliminate the role of randomness (which is mostly tangential to the main points of this paper) by allowing the optimizer and learner to directly play mixed strategies in $\Delta([M])$ and $\Delta([N])$, and deterministically receive the corresponding expected reward. That is, when the optimizer plays $\alpha \in \Delta([M])$ and $\beta \in \Delta([N])$, the optimizer's utility is given (deterministically) by $u_{O}(\alpha, \beta) = \sum_{i=1}^{M}\sum_{i=1}^{N} \alpha_{i}\beta_{j}u_{O}(i, j)$ (and the learner's utility is computed similarly).

In the second part of this paper, we extend our study to a specific subclass of Bayesian games where the learner is randomly assigned a \emph{type}, unknown to the optimizer (we refer to such games simply as \textit{Bayesian games}). Such games arise naturally in various economic settings, for example an optimizer selling an item to a learner where the learner's private value for the item is their type \citep{braverman2018selling}. More formally, a Bayesian game begins with one of $C$ \textit{contexts} (``types'') $c \in [C]$ being drawn from a publicly known distribution $\cD$ with probability $p_c$ of outputting context $c$. This context $c$ is told to the learner but not to the optimizer. Based on the context $c$, the learner chooses an action $j \in [N]$; simultaneously, the optimizer chooses an action $i \in [N]$. The optimizer then receives utility $u_{O}(i, j, c)$, and the learner receives utility $u_{L}(i, j, c)$ -- note that we allow both utilities to depend on the context $c$. 

As with standard games, we eliminate the role of randomness in Bayesian games by allowing the optimizer and learner to play mixed strategies and assigning rewards deterministically. As with standard games, the optimizer plays a mixed strategy $\alpha \in \Delta([M])$. The learner simultaneously plays a function $\beta \colon [C] \to \Delta([N])$ mapping contexts to mixed strategies (representing which strategy the learner would play for each context). The optimizer then receives reward 

$$u_{O}(\alpha, \beta) = \sum_{k=1}^{C}p_{k}u_{O}(\alpha, \beta(c_k), c_k) = \sum_{c=1}^{C}\sum_{i=1}^{M}\sum_{i=1}^{N} p_{c}\alpha_{i}\beta(c)_{j}u_{O}(i, j, c).$$

\noindent
The learner's reward is computed similarly.

We are interested in settings where the optimizer and learner repeatedly play a game for $T$ rounds. We write $\alpha^t$ to denote the optimizer's strategy in round $t$ and $\beta^t$ to denote the learner's strategy in round $t$. We will also insist that this repeated game is \textit{full information}, in the sense that after each round, either player should be able to figure out their counterfactual utility if they had played a different mixed strategy that round (for example, this is the case when the mixed actions $\alpha^t$ and $\beta^t$ of both players are made publicly known after round $t$ and that both the optimizer and learner have full knowledge of their utility functions). This will allow the learner to play this game by running the learning algorithms detailed in the next section.

\subsection{Learning algorithms, regret, and swap regret}\label{sec:intro_learning}


As their name suggests, the learner will play the game by running an online learning algorithm to select their actions. We will consider the following (fractional, full-information, and deterministic) model for online learning:

A learner will face a decision between $N$ actions for each of $T$ rounds. An adversary begins by obliviously\footnote{Since the learner here is deterministic, it actually does not make a difference whether we allow the adversary to be adaptive or not; an oblivious adversary can simply simulate the actions of the learner.} selecting $T$ reward vectors $r^1, r^2, \dots, r^{T} \in [0, 1]^N$, where $r^{t}_{i}$ represents the reward if the learner picks action $i$ in round $t$. Then, for each round $1 \leq t \leq T$ the learner selects a distributional action $\beta^{t} \in \Delta([N])$ deterministically as a function of the reward vectors in previous rounds, i.e., $r_1, r_2, \dots, r_{t-1}$. The learner then receives reward $\sum_{j=1}^{N} \beta^{t}_j r^{t}_j$ and the full reward vector $r_{t}$ for round $t$ is revealed to the learner.

Note that a learner can use such a learning algorithm to play in standard games. We evaluate a learning algorithm by providing bounds on some form of ``regret''. We consider two such notions: the \textit{external regret} of an algorithm, and the \textit{swap regret} of an algorithm. The external regret (or simply ``regret'') of a learning algorithm on a specific problem instance represents the gap between the total reward obtained by the learning algorithm and the best reward obtainable by playing the best single fixed action in hindsight; it is given by:
\begin{align*}
\Reg = \left(\max_{j^{*} \in [N]} \sum_{t=1}^{T} r^{t}_{j^*}\right) - \left(\sum_{t=1}^{T}\sum_{j=1}^{N} \beta^{t}_{j} r^{t}_{j} \right).
\end{align*}

A learning algorithm is \textit{low-regret} if it sustains $o(T)$ external regret on any problem instance with $T$ rounds (and a fixed number of actions). It is well known that there exist efficient low-regret algorithms in this setting which sustain regret at most $O(\sqrt{T\log N})$ \citep{LittlestoneWarmuth1994,FreundSchapire1997}. Interestingly, the property of being low-regret is not sufficient to guarantee good performance in the optimizer-learner settings described in Section \ref{sec:intro_games}; there are games where the optimizer can get much more than their Stackelberg value if the learner plays certain low-regret algorithms. As we will show, to guarantee that this does not occur, the learner must play an algorithm with low swap-regret. 


The swap regret of a learning algorithm on a specific problem instance represents the gap between the total reward obtained by the learning algorithm, and the maximum award they could obtain in hindsight if they had applied a deterministic \textit{swap function} to their actions (i.e., playing action 2 instead of action 1 every time they played action 1 with any weight). Formally, we can define the swap regret as follows:

\begin{align*}
\SwapReg = \left(\max_{\pi: [N]\rightarrow[N]} \sum_{t=1}^{T}\sum_{j=1}^{N} \beta^{t}_{j} r^{t}_{\pi(j)}\right) - \left(\sum_{t=1}^{T}\sum_{j=1}^{N} \beta^{t}_{j} r^{t}_{j} \right).
\end{align*}

A learning algorithm is \textit{low-swap-regret} if it sustains $o(T)$ swap regret on any problem instance with $T$ rounds. As with external regret, it is known there exist low-swap-regret algorithms. For example, the construction of \citet{BlumMansour2007} demonstrates how to devise an algorithm with swap regret $O(\sqrt{TN\log N})$.

Until now, we have described a form of online learning that can be used to play standard games. To play Bayesian games, we need a form of online \textit{contextual} learning -- we defer discussion of this to the beginning of Section \ref{sec:bayesian}.

\subsection{Stackelberg equilibria and strategies}\label{sec:intro_equilibria}
\label{sec:StackVal}
One of the primary benchmarks that we will use to measure the performance of the optimizer is the optimizer's value in the Stackelberg equilibrium of the one-shot game.

Let $G$ be a standard game, and for each mixed strategy $\alpha \in \Delta([M])$, define the learner's best-response function $\BR(\alpha) = \argmax_{j \in [N]} u_{L}(\alpha, j)$. We then define the Stackelberg value of $G$ to be the value

$$\Val(G) = \max_{\alpha} \max_{\beta \in \BR(\alpha)} u_{O}(\alpha, \beta).$$

We can similarly define the Stackelberg value $\Val(G)$ for a Bayesian game, with the only difference that now the learner's best response $\BR(\alpha) = \argmax_{\beta \in [N]^{[C]}} u_{L}(\alpha, j)$ is taken over all strategies $\beta(c) : [C] \rightarrow [N]$ mapping contexts to actions. 

Intuitively, the Stackelberg value represents the maximum value the optimizer can obtain by playing a fixed strategy against a strategic learner. Note that: a) we allow the optimizer to play a mixed strategy instead of just a pure strategy (so this is what is occasionally referred to as a Stackelberg mixed strategy, e.g. in \cite{conitzer2016stackelberg}) and b) we break ties for the learner in favor of the optimizer. 

The Stackelberg value is a benchmark that arises naturally in our setting for the following reason\footnote{For the case of standard games, this was shown in \cite{deng2019strategizing}; we include the straightforward generalization to Bayesian games (and more generally, polytope games) in Appendix \ref{app:achieve_stack}.}: if the optimizer is playing a game $G$ for $T$ rounds against a learner running a low-regret algorithm, then the optimizer can guarantee (under some mild conditions on $G$) that they receive reward at least $\Val(G)T - o(T)$. Moreover, the optimizer can accomplish this by playing their fixed Stackelberg strategy every round. The central goal of this paper is to understand when the optimizer can significantly outperform this benchmark by playing a dynamic strategy.

\section{Standard games}\label{sec:standard}

We begin our discussion with standard games. In \cite{deng2019strategizing}, the authors show that if an optimizer is playing a learner with low swap regret, the optimizer can get no more than $\Val(G)T + o(T)$ utility. We complement this result by showing that low swap regret is \textit{necessary}; if a learner is playing an algorithm that is not low swap regret, then it is possible to construct a game $G$ where an optimizer can gain significantly ($\Omega(T)$) more than their Stackelberg value by playing some dynamic strategy against this learner. 

The main argument is encapsulated in the following lemma, which shows how to convert a high swap-regret online learning instance for the learner into a game where the optimizer outperforms Stackelberg.

\begin{lemma}\label{lem:standard}
Let $\cA$ be a learning algorithm which incurs swap-regret $R$ on some online learning instance with $N$ actions and $T$ rounds. Then there exists a standard game $G$ (with $N$ actions for the learner and $M = 2^N$ actions for the optimizer) such that if an optimizer plays $T$ rounds of $G$ against a learner running $\cA$, the optimizer can receive a total reward of $\Val(G)T + \frac{1}{2}R$. 
\end{lemma}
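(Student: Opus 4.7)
The plan is to encode the online learning instance directly into a game. I would index the optimizer's $M = 2^{N}$ actions by subsets $S \subseteq [N]$ and set $u_{L}(S, j) = \Ind[j \in S]$. With this learner payoff, any mixed optimizer strategy $\alpha$ induces on the learner the reward vector $p \in [0,1]^{N}$ with $p_{j} = \Pr_{S \sim \alpha}[j \in S]$; by taking $\alpha$ to be a product of independent Bernoullis, the optimizer can realize any vector $p \in [0,1]^{N}$ in any given round. The crucial step is the choice of $u_{O}$: using $\cA$'s (deterministic) behavior on the given instance, extract the swap function $\pi^{*} \colon [N] \to [N]$ attaining the maximum in the definition of swap regret, so that $\sum_{t}\sum_{j} \beta^{t}_{j}(r^{t}_{\pi^{*}(j)} - r^{t}_{j}) = R$, and then set $u_{O}(S, j) = \tfrac{1}{2}\big(\Ind[\pi^{*}(j) \in S] - \Ind[j \in S]\big)$, which lies in $[-\tfrac{1}{2}, \tfrac{1}{2}]$.

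To upper-bound the Stackelberg value, observe that for any $\alpha$ with marginals $p$, the learner's best response lies in $\argmax_{j} p_{j}$, so for any $j^{*} \in \BR(\alpha)$ we have $p_{\pi^{*}(j^{*})} \leq p_{j^{*}}$ and hence $u_{O}(\alpha, j^{*}) = \tfrac{1}{2}(p_{\pi^{*}(j^{*})} - p_{j^{*}}) \leq 0$. Tie-breaking in favor of the optimizer does not escape this bound, so $\Val(G) \leq 0$; taking $\alpha$ uniform over singletons (or on the empty set) attains equality, giving $\Val(G) = 0$.

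For the dynamic lower bound, I would have the optimizer play in round $t$ the product distribution $\alpha^{t}$ whose marginals equal the instance's reward vector $r^{t}$. The learner then sees exactly $r^{t}$ on round $t$, so by determinism $\cA$ outputs the same $\beta^{t}$ as on the original online learning instance. A direct computation gives $u_{O}(\alpha^{t}, \beta^{t}) = \tfrac{1}{2}\sum_{j} \beta^{t}_{j}(r^{t}_{\pi^{*}(j)} - r^{t}_{j})$; summing over $t$ yields a total of $R/2$, which equals $\Val(G)\cdot T + R/2$, as required.

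The main conceptual obstacle is identifying the right $u_{O}$. Natural first guesses such as $u_{O} = \tfrac{1}{2} u_{L}$ or $u_{O} = \tfrac{1}{2}(1 - u_{L})$ fail, because their Stackelberg value is already comparable to the best attainable dynamic reward, wiping out any gain. The key idea is to let $u_{O}$ encode the per-round swap \emph{improvement} of $\pi^{*}$ rather than a reward itself: this simultaneously forces $\Val(G) = 0$ (because the learner's argmax behavior in Stackelberg makes any swap nonpositive) and lets the dynamic strategy harvest the full $R/2$ baked into the instance.
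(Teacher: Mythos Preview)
Your argument is correct and is essentially the paper's proof: both set the learner's payoff to the coordinate value of a hypercube vertex, set the optimizer's payoff to one-half the $\pi^{*}$-swap improvement, deduce $\Val(G)\le 0$ from the learner's best-response inequality, and have the optimizer realize each $r^{t}$ as a mixture over hypercube vertices so that $\cA$ replays its bad transcript. The only cosmetic differences are that you index by subsets of $[N]$ (vertices of $\{0,1\}^{N}$) and use an explicit product-Bernoulli mixture, while the paper uses $\{-1,1\}^{N}$ and invokes Carath\'eodory, and the paper first passes through an $M=T$ construction before compressing to $2^{N}$ actions; your direct route is slightly cleaner but otherwise identical in substance.
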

\begin{proof}
Recall that in our model, an online learning instance with $N$ actions and $T$ rounds is completely specified by a sequence of $T$ reward vectors $r^{t} \in [-1, 1]^N$. Fix $r^{t}$ to be the bad instance mentioned in the theorem statement for algorithm $\cA$, and let $\beta^t \in \Delta([N])$ denote the action of the learner in round $t$. Since the regret of $\cA$ on this bad instance is $R$, this implies that for some swap function $\pi:[N] \rightarrow [N]$, we have that

\begin{equation}\label{eq:hiswap}
    \left(\sum_{t=1}^{T}\sum_{j=1}^{N} \beta^{t}_{j} r^{t}_{\pi(j)}\right) - \left(\sum_{t=1}^{T}\sum_{j=1}^{N} \beta^{t}_{j} r^{t}_{j} \right) = R. 
\end{equation}

We will now show how to use the reward $r^{t}$ to construct a game where the optimizer can do better than the Stackelberg equilibrium. For now, we will construct a game $G$ where the learner has $N$ actions and the optimizer has $M=T$ actions -- we will later show how to decrease $M$ to a value independent of $T$.

We begin by specifying the learner's payoffs. Unsurprisingly, these will be drawn directly from the learner's rewards in the online learning problem: specifically, if the optimizer plays an action $1 \leq i \leq T$ and the learner plays action $j \in [N]$, the learner will receive the reward they would have received if they played $j$ in the $i$th round of the online learning instance, namely $u_{L}(i, j) = r^{i}_{j}$. Note that this has the property that if the optimizer plays action $t$ in round $t$, the learning algorithm $\cA$ will see exactly the learning instance mentioned above and therefore play $\beta^t$ in each round $t$. 

The more interesting aspect of constructing this game is selecting payoffs for the optimizer. We do this as follows. If the optimizer plays action $i$ and the learner plays action $j$, we set $u_{O}(i, j) = \frac{1}{2}(r^{i}_{\pi(j)} - r^{i}_{j})$ (note that since $r^{i} \in [-1, 1]^N$, this scaling ensures that $|u_{O}(i, j)| \leq 1$). This lets us rewrite \eqref{eq:hiswap} in the form 

\begin{equation}\label{eq:hiswap2}
    \sum_{t=1}^{T} u_O(t, \beta^t) = \frac{R}{2}. 
\end{equation}

But note that the LHS of \eqref{eq:hiswap2} is exactly the total payoff the optimizer receives if they play action $t$ in round $T$ against a learner running $\cA$. Therefore the optimizer can obtain a total reward of $R/2$ against such a learner. 

We now argue that the Stackelberg value of this game is at most $0$. To do this, imagine that in a single-shot Stackelberg instance of $G$, the optimizer plays a mixed strategy $\alpha \in \Delta([M])$ and the learner best-responds by playing $j \in [N]$. Then note that
\[
u_{O}(\alpha, j) = \frac{1}{2} \sum_{i=1}^{M} \alpha_i \left(r^{i}_{\pi(j)} - r^{i}_j\right) = \frac{1}{2}\left(u_{L}(\alpha, \pi(j)) - u_{L}(\alpha, j)\right).
\]

But since $j$ is a best response to $\alpha$ for the learner, we must have that $u_{L}(\alpha, j) \geq u_{L}(\alpha, \pi(j))$; it follows that $u_{O}(\alpha, j) \leq 0$, and therefore $\Val(G) \leq 0$. This completes our proof of the existence of a game (albeit one with many actions for the optimizer) where the optimizer can get at least $R/2$ more than the Stackelberg value of the game.

We now show how to construct a game $G'$ with the same property, but where the optimizer only has $M = 2^N$ actions. To do this, observe that if you fix an action $i \in [M]$ for the optimizer, both the optimizer's payoff $u_{O}(i, \cdot)$ and learner's payoff $u_{L}(i, \cdot)$ are linear functions of $r^{i}$. This motivates the following construction. For each $1 \leq i \leq M$ let $s^i$ be the $i$th element of $S = \{-1, 1\}^N$ (for some arbitrary labelling of the $M$ elements of $S$), and let:
\begin{eqnarray*}
u_{L}(i, j) &=& s^{i}_j \\
u_{O}(i, j) &=& \frac{1}{2}(s^i_{\pi(j)} - s^{i}_j).
\end{eqnarray*}

Since $S$ contains the vertices of $[-1, 1]^N$, by Caratheodory's theorem, for each $r^{t} \in [-1, 1]^N$, there exists an $\alpha^{t} \in \Delta([M])$ such that $r^{t} = \sum_{i=1}^{M}\alpha^{t}_i s^{i}$. In particular, this implies that $u_L(\alpha^{t}, j) = r^{t}_j$ and $u_{O}(\alpha^{t}, j) = (r^{t}_{\pi(j)} - r^{t}_j)/2$, so by playing the sequence of actions $\alpha^t$, the optimizer can still guarantee total reward $R/2$. Moreover, the Stackelberg value of $G'$ is still $0$, by the same logic as before. 
\end{proof}

\begin{remark}
There is a sense in which the exponential dependence of $M$ on $N$ in Lemma \ref{lem:standard} is unnecessary. In particular, if there exists a bad instance for $\cA$ where all the rewards lie in $[-1/N, 1/N]^N$, then we can conduct the same construction at the end of Lemma \ref{lem:standard}, but with the set of $2N$ vectors $S = \{e_1, -e_1, e_2, -e_2, \dots, e_{N}, -e_{N}\}$ (in general, it is only necessary that it is possible to write each reward vector $r^{t}$ as a convex combination of elements of $S$). 
\end{remark}

We now apply Lemma \ref{lem:standard} to show that if the learner is \textit{not} running a low-swap-regret algorithm, there is a game $G$ where an optimizer can get $\Omega(T)$ more than the Stackelberg value.

\begin{theorem}\label{thm:main_standard}
If $\cA$ is not a low-swap-regret learning algorithm, then there exists a game $G$ where if an optimizer plays $T$ rounds of $G$ against a learner running $\cA$, the optimizer can get reward at least $\Val(G)T + \Omega(T)$ for infinitely many values of $T$. 
\end{theorem}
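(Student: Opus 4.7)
The plan is to feed Lemma \ref{lem:standard} into a pigeonhole argument so that a single game $G$ works for infinitely many horizons $T$. Since $\cA$ is not low-swap-regret, there exist a constant $c > 0$ and an infinite set $\cT \subseteq \mathbb{N}$ such that for every $T \in \cT$ there is an online learning instance $I_T$ (with $N$ actions and $T$ rounds) on which $\cA$ incurs swap regret at least $cT$. For each such $T$ I fix a witness swap function $\pi_T : [N] \to [N]$ that certifies this regret.

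The key observation is that in the second construction at the end of the proof of Lemma \ref{lem:standard}, the resulting game has payoffs $u_L(i,j) = s^i_j$ and $u_O(i,j) = \tfrac{1}{2}(s^i_{\pi(j)} - s^i_j)$ that depend only on $N$ and the chosen swap function $\pi$, and in particular not on $T$ or on the specific reward sequence. Since there are only $N^N$ swap functions on $[N]$, pigeonhole yields a single $\pi^*$ with $\pi_T = \pi^*$ for infinitely many $T \in \cT$; call this infinite subset $\cT'$, and let $G$ be the game built from $\pi^*$ via the lemma. The lemma's analysis already guarantees $\Val(G) \leq 0$.

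To conclude, for each $T \in \cT'$ I would invoke Lemma \ref{lem:standard} with swap function $\pi^*$ and instance $I_T$: it supplies a dynamic (and $T$-dependent) optimizer strategy against $\cA$ in $G$ earning total reward at least $\tfrac{1}{2} cT$, and since $\Val(G) \leq 0$ this is at least $\Val(G) T + \Omega(T)$, as the theorem asserts. The only subtle point worth checking is that the optimizer's strategy in each round of $G$ is permitted to depend on $T$ and on $I_T$ -- which the theorem allows -- while the game $G$ itself must be the same for every $T \in \cT'$; isolating $\pi$ as the sole instance-dependent parameter of the Lemma \ref{lem:standard} construction is what makes the pigeonhole step go through, and is the only nontrivial obstacle in the argument.
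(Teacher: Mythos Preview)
Your proposal is correct and matches the paper's own proof essentially line for line: both extract a fixed $N$ and a constant $c>0$ with swap regret $\geq cT$ for infinitely many $T$, observe that the game $G'$ built in the second half of Lemma~\ref{lem:standard} depends only on the swap function $\pi$ (not on $T$ or the reward sequence), and then pigeonhole over the $N^N$ possible swap functions to find one $\pi^*$---and hence one game $G$---that works for infinitely many $T$.
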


\section{Bayesian games}\label{sec:bayesian}

We now begin our exploration of Bayesian games. The model for online learning introduced in Section \ref{sec:intro_learning} covers algorithms a learner can use to play standard games. In order to play a Bayesian game, the learner needs to use an algorithm for \textit{online contextual learning}. 

Our model for online contextual learning will be similar to the model for online learning presented in Section \ref{sec:intro_learning} (in that it will be fractional, full-information, and deterministic), with some differences analogous to the difference between standard games and Bayesian games. Specifically, in online contextual learning:

\begin{itemize}[leftmargin=*,noitemsep,nolistsep]
    \item There is a publicly known distribution $\cD$ over $C$ contexts, where context $c \in [C]$ occurs with probability $p_{c}$.
    \item Instead of picking a sequence of $T$ reward vectors in $[0, 1]^{N}$, the adversary picks $T$ reward vectors in $r^{t} \in [0, 1]^{N \times C}$, where $r^{t}_{i, c}$ represents the reward if the learner picks action $i \in [N]$ in context $c \in [C]$. 
    \item Instead of selecting a single mixed action in round $t$, the learner instead picks a function $\beta^{t}\colon [C] \rightarrow \Delta([N])$ mapping contexts to distributions over actions. 
    
    \item In round $t$, the learner receives reward $\sum_{c=1}^{C}\sum_{j=1}^{N}p_{c}\beta^{t}(c)_{j} r^t_{j, c}$. The learner wishes to maximize their total reward over all $T$ rounds.
\end{itemize}

It is straightforward to define a notion of external regret for online contextual learning. If we let

\begin{align*}
\Reg = \left(\max_{f^{*} : [C] \rightarrow [N]} \sum_{t=1}^{T} \sum_{c=1}^{C} p_{c}r^{t}_{f^{*}(c), c}\right) - \left(\sum_{t=1}^{T}\sum_{c=1}^{C}\sum_{j=1}^{N}p_{c}\beta^{t}(c)_{j} r^t_{j, c} \right),
\end{align*}

\noindent
then $\Reg$ represents the difference in utilities on this problem instance between the learning algorithm and the learner who in each context $c$ plays the best-in-hindsight action $f^*(c)$ for this context. It is similarly straightforward to construct algorithms for online contextual learning which incur external regret at most $O(\sqrt{T\log N})$ in the above setting (for example, one can simply run a low-regret online learning algorithm per context). 

Interestingly, it is much less clear what the correct analogue of swap regret for online contextual learning is. Many obvious guesses (such as the notion of regret obtained by running a low-swap-regret algorithm per context) turn out to be ``incorrect'', in that they do not allow us to prove analogues of Theorem \ref{thm:main_standard} for Bayesian games (we explore these in more detail in Section \ref{sec:bayesian_algs}). Ultimately, in this section we will define two notions of swap-regret with the following guarantees:

\begin{itemize}[leftmargin=*,noitemsep,nolistsep]
    \item \textbf{Polytope swap regret}: If a contextual learning algorithm $\cA$ has low polytope swap regret, then an optimizer can get at most $\Val(G)T + o(T)$ reward when playing a Bayesian game $G$ for $T$ rounds against a learner running $\cA$. 
    \item \textbf{Linear swap regret}: If a contextual learning algorithm $\cA$ \textit{does not have} low linear swap regret, there exists a Bayesian game $G$ where if an optimizer plays $T$ rounds of $G$ against a learner using $\cA$, the optimizer can get at least $\Val(G)T + \Omega(T)$ reward (for infinitely many values of $T$). 
\end{itemize}

\subsection{Polytope games and polytope learning}

Instead of defining these notions of swap regret directly for Bayesian games, we will find it convenient to define a more general class of games and learning algorithms that  generalizes both forms of games (standard and Bayesian) and learning (regular and contextual) that we have considered so far. We call this class of games \textit{polytope games}, and the corresponding variety of learning \textit{polytope learning}. Polytope learning will turn out to be essentially equivalent to online linear optimization (the major difference being that we restrict the action set to be a polytope as opposed to an arbitrary convex set), but we refer to it this way to emphasize the connection to polytope games. 

In a polytope game $G$, the learner must select a point $x$ belonging to some bounded polytope $\cP \subseteq [-1, 1]^d$. Simultaneously, the optimizer must select a point $q = (r, s)$ belonging to some polytope $\cQ \subseteq [-1, 1]^d \times [-1, 1]^{d}$ (i.e, both $r$ and $s$ are $d$-dimensional vectors). The optimizer then receives utility $\langle s, x\rangle$, and the learner receives utility $\langle r, x \rangle$. As with standard games and Bayesian games, we can easily define the Stackelberg value $\Val(G)$ of this game to be the maximum value an optimizer can guarantee by playing a fixed\footnote{Note that since the reward function is linear and the action spaces of both the optimizer and learner are convex, there is no need to consider mixed actions -- for either player, any mixed action is equivalent to some single action.} action $(r, s) \in \cQ$ with the learner best responding.


To play a repeated polytope game, a learner can run a polytope learning algorithm. An instance of polytope learning is specified by a polytope $\cP$ and a sequence of $T$ $d$-dimensional reward vectors $r^{1}, r^{2}, \dots, r^{T} \in [-1, 1]^d$. At the beginning of round $t$, the learner must select a point $x^{t} \in \cP$; the learner then receives reward $\langle r^{t}, x^{t} \rangle$. The goal of the learner is to maximize their total reward. As with our previous learners, we will restrict our attention to deterministic algorithms (i.e., algorithms $\cA$ that choose $x^{t}$ deterministically as a function of $r^{1}, r^{2}, \dots, r^{t-1}$).

Here are some examples of polytope games and polytope learning:

    \paragraph{1) Standard games and online learning.} Given a standard game $G$, let $\cP = \Delta([N])$ and let $\cQ = \conv(\{(r_i, s_i)\}_{i=1}^{M})$, where for each $i \in [M]$
    \begin{eqnarray*}
    r_{i} &=& (u_{L}(i, 1), u_{L}(i, 2), \dots, u_{L}(i, N)) \in \mathbb{R}^N\\
    s_{i} &=& (u_{O}(i, 1), u_{O}(i, 2), \dots, u_{O}(i, N)) \in \mathbb{R}^N.
    \end{eqnarray*}
    
    Then the polytope game $G'$ defined by $\cP$ and $\cQ$ is ``equivalent'' to the standard game $G$ in the following sense: the map $f$ which sends $\alpha \in \Delta([M])$ to the point $q = \sum_{i=1}^{M} \alpha_{i}(r_{i}, s_{i}) \in \cQ$ and the identity map $g$ which sends $\beta \in \Delta([N])$ to $\beta \in \cP$ together have the property that $u_{O}(\alpha, \beta) = u_{O}(f(\alpha), g(\beta))$ and $u_{L}(\alpha, \beta) = u_{L}(f(\alpha), g(\beta))$; moreover, $f$ and $g$ are both surjective onto $\cP$ and $\cQ$ respectively. Intuitively, we can translate any strategy profile in $G$ to an ``equivalent'' strategy profile in $G'$ and vice versa. Note that there may be multiple strategy profiles in $G$ that map to the same strategy profile in $G'$; this corresponds to the fact that it is possible for two different mixed strategies for the learner to always result in the same payoffs. 
    
    
    Similarly, polytope learning over the simplex $\cP = \Delta([N])$ is equivalent to online learning. Here the reduction is immediate -- the online learning problem in Section \ref{sec:intro_learning} is exactly polytope learning with $\cP = \Delta([N])$. 
    
    \paragraph{2) Bayesian games and contextual learning.} Given a Bayesian game $G$, let $\cP = \Delta([N])^{C} \subseteq \mathbb{R}^{N \times C}$ (one can think of $\cP$ as the convex hull of all $C$-by-$N$ stochastic matrices), and let $\cQ = \conv(\{(r_i, s_i)\}_{i=1}^{M})$, where $s_i \in \mathbb{R}^{N \times C}$ is defined via $s_{i, j, c} = p_{c}u_{O}(i, j, c)$ and $r_{i} \in \mathbb{R}^{N \times C}$ is defined via $r_{i, j, c} = p_{c}u_{L}(i, j, c)$. Then the polytope game $G'$ defined by $\cP$ and $\cQ$ is equivalent to the Bayesian game $G$ (in the same sense and for the same reason as above).
    
    Similarly, polytope learning over the polytope $\cP = \Delta([N])^{C}$ is the contextual learning problem we defined at the beginning of this section (with the subtle difference that the reward vectors $r^{t}_{i, c}$ should be scaled by the probabilities $p_{c}$). 

In general, we will want to fix a polytope $\cP$ (e.g. the Bayesian game polytope) and consider the class of polytope games where the learner's actions belong to $\cP$. We call such games \textit{$\cP$-games}; note that such games are entirely specified by the optimizer's action set $\cQ$. Similarly, when discussing polytope learning, we will want to fix a polytope $\cP$ and consider the class of polytope learning instances / algorithms where the learner's actions must belong to $\cP$. We call this problem \textit{$\cP$-learning} for short, and such algorithms \textit{$\cP$-learning algorithms}.

\subsection{Linear swap regret}
\label{sec:linswapreg}
We begin by defining a variant of swap regret for polytope learning that we call \textit{linear swap regret}. In this variant of swap regret, a learner compares their utility to the utility they would have received if they applied a static linear transformation to each of their actions.

Formally, let $\cP$ be a polytope and consider an instance of the  $\cP$-learning problem where the rewards are $r^{1}, r^{2}, \dots, r^{T}$ and the actions of the learner are given by $x^{1}, x^{2}, \dots, x^{T}$. Then the linear swap regret of the learner on this instance is given by

$$\LinSwapReg = \max_{M \in \cM(\cP)}\sum_{t=1}^{T} \langle r^{t}, Mx^{t} \rangle  - \sum_{t=1}^{T} \langle r^{t}, x^{t} \rangle,$$

\noindent
where $\cM(\cP)$ is the set of all linear transformations $M: \mathbb{R}^{d} \rightarrow \mathbb{R}^{d}$ that satisfy $Mx \in \cP$ for all $x \in \cP$ (i.e., the set of linear transformations that are contractions of $\cP$). 

As with other forms of regret, we say that a polytope learning algorithm $\cA$ has low linear swap regret if it sustains $o(T)$ linear swap regret on any problem instance with $T$ rounds. We now show that in order for the optimizer to get no more than the Stackelberg value, it is necessary for the learner to run a low linear swap regret algorithm. As in Section \ref{sec:standard}, we begin by showing this is true on a per instance level.

\begin{lemma}\label{lem:linear}
Fix a polytope $\cP \subseteq [-1, 1]^d$. Let $\cA$ be a $\cP$-learning algorithm which has linear swap regret $R$ on some problem instance. Then there exists a $\cP$-game such that if an optimizer plays $T$ rounds of $G$ against a learner running $\cA$, the optimizer can receive a total reward of $\Val(G)T + R/(\lambda + 1)$, where $\lambda = \max_{M \in \cM(\cP)} ||M||_{1}$.
\end{lemma}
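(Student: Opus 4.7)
The plan is to mirror the construction in Lemma~\ref{lem:standard}, adapted to the polytope setting. Fix the bad instance: reward vectors $r^1,\ldots,r^T \in [-1,1]^d$ on which $\cA$ incurs linear swap regret $R$, let $x^t \in \cP$ be the action $\cA$ plays in round $t$ on this instance, and let $M \in \cM(\cP)$ achieve the maximum in the definition of $\LinSwapReg$. Using the identity $\langle r, Mx\rangle = \langle M^T r, x\rangle$, the hypothesis rewrites as $\sum_{t=1}^T \langle M^T r^t - r^t, x^t\rangle = R$. The idea is to design a $\cP$-game in which the optimizer's per-round payoff is a rescaled copy of $\langle M^T r - r, x\rangle$ so that summing over rounds recovers $R$ up to scaling, while the Stackelberg value is forced to be nonpositive because the learner's best response automatically dominates $M$ applied to that response.

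Concretely, I would define the game by setting
\[
\cQ \;=\; \Bigl\{\bigl(r,\ \tfrac{1}{\lambda+1}(M^T r - r)\bigr) \,:\, r \in [-1,1]^d\Bigr\},
\]
the image of $[-1,1]^d$ under an affine map, which is a polytope in $\mathbb{R}^d \times \mathbb{R}^d$. It lies inside $[-1,1]^d \times [-1,1]^d$ because the standard matrix-norm identity $\|M^T\|_\infty = \|M\|_1 \leq \lambda$ gives $\|M^T r - r\|_\infty \leq (\lambda+1)\|r\|_\infty \leq \lambda+1$, exactly the quantity cancelled by the factor $1/(\lambda+1)$. Now suppose in round $t$ the optimizer plays $q^t = (r^t,\ \tfrac{1}{\lambda+1}(M^T r^t - r^t)) \in \cQ$. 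The learner sees precisely the reward vectors $r^t$ and so plays the same actions $x^t$ as in the bad instance, and the optimizer's total payoff is $\tfrac{1}{\lambda+1}\sum_{t=1}^T \langle M^T r^t - r^t, x^t\rangle = R/(\lambda+1)$.

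It remains to verify that $\Val(G) \leq 0$. For any fixed optimizer action $(r,s) \in \cQ$ we have $s = \tfrac{1}{\lambda+1}(M^T r - r)$ by construction, and any learner best response $x^\star \in \argmax_{x \in \cP}\langle r, x\rangle$ satisfies $\langle r, x^\star\rangle \geq \langle r, Mx^\star\rangle$ because $Mx^\star \in \cP$. Hence the optimizer's utility against this best response equals $\langle s, x^\star\rangle = \tfrac{1}{\lambda+1}\langle r, Mx^\star - x^\star\rangle \leq 0$, so $\Val(G) \leq 0$, and combining with the previous paragraph the optimizer's reward of $R/(\lambda+1)$ is at least $\Val(G)T + R/(\lambda+1)$. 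The main place requiring care is the sizing of $\cQ$ so that the $s$-component lands in $[-1,1]^d$, and this is exactly what pins down the scale $1/(\lambda+1)$ in the statement.
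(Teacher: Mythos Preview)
Your proposal is correct and essentially identical to the paper's own proof: the paper also constructs $\cQ$ as the image of $[-1,1]^d$ under $y \mapsto (y,\tfrac{1}{\lambda+1}(M-I)^\intercal y)$, uses the same $\|M^\intercal\|_\infty=\|M\|_1$ bound to fit the second coordinate in $[-1,1]^d$, and argues $\Val(G)\le 0$ exactly as you do. If anything, your write-up is slightly cleaner in that you explicitly carry the $1/(\lambda+1)$ scaling through the optimizer's play.
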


When $\cP = \Delta([N])$, linear swap regret reduces to the ordinary notion of swap regret. Indeed, the set $\cM(\Delta([N]))$ of linear contractions of $\Delta([N])$ is exactly the set of $N$-by-$N$ stochastic matrices, and the extreme points of $\cM(\Delta([N]))$ are the $N$-by-$N$ 0/1-matrices which contain exactly one $1$ in each row. These matrices correspond to (and act the same way on $\pi$) as the $N^N$ swap functions $\pi: [N] \rightarrow [N]$. 

As with standard games, we can apply Lemma \ref{lem:linear} to show that if $\cA$ does not have low linear swap regret, then it is possible for an optimizer to get strictly more than their Stackelberg value by playing against a learner running $\cA$ in a fixed game.

\begin{theorem}\label{thm:linear}
Fix a polytope $\cP \subseteq [-1, 1]^d$. If a $\cP$-learning algorithm $\cA$ \textit{does not have} low linear swap regret, then there exists a $\cP$-game $G$ where if an optimizer plays $T$ rounds of $G$ against a learner running $\cA$, the optimizer can get at least $\Val(G)T + \Omega(T)$ reward (for infinitely many values of $T$). 
\end{theorem}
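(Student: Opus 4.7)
The plan is to promote the per-instance result of Lemma~\ref{lem:linear} to a single game $G$ that achieves the advertised gap for infinitely many values of $T$, by a pigeonhole argument over the witnessing linear maps. This is the direct analogue of how Theorem~\ref{thm:main_standard} is derived from Lemma~\ref{lem:standard} in the standard-game setting.

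First I would unpack the hypothesis. Since $\cA$ does not have low linear swap regret, there is some constant $\epsilon > 0$ and an infinite set $\cT \subseteq \mathbb{N}$ such that for every $T \in \cT$, the algorithm $\cA$ incurs linear swap regret at least $\epsilon T$ on some $\cP$-learning instance of length $T$. The maximum in the definition of $\LinSwapReg$ is taken over $\cM(\cP)$, which is itself a polytope: if $v_1,\dots,v_k$ are the vertices of $\cP$ and $Ax \le b$ is a facet description of $\cP$, then $\cM(\cP) = \{M \in \mathbb{R}^{d\times d} : A M v_i \le b \text{ for all } i\}$ is a bounded polyhedron, hence a polytope with finitely many extreme points. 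Because $M \mapsto \sum_t \langle r^t, Mx^t\rangle$ is linear in $M$, its maximum over $\cM(\cP)$ is attained at one of these extreme points, so on every bad instance the $\epsilon T$ regret bound is witnessed by some extreme point of $\cM(\cP)$.

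The key step is then pigeonhole: since $\cM(\cP)$ has only finitely many extreme points, there is a single extreme point $M^\star \in \cM(\cP)$ and an infinite subset $\cT' \subseteq \cT$ such that for every $T \in \cT'$ the $\epsilon T$ bound on linear swap regret is witnessed by $M^\star$. Inspecting the construction behind Lemma~\ref{lem:linear} (in the same spirit as the endgame of Lemma~\ref{lem:standard}), the resulting $\cP$-game can be taken to depend only on $\cP$ and on the witnessing matrix; one may set the optimizer's action set to
\[
\cQ \;=\; \operatorname{conv}\Bigl\{\bigl(v,\; \tfrac{1}{\lambda+1}\bigl((M^\star)^{\top} v - v\bigr)\bigr) : v \in \{-1,1\}^d\Bigr\},
\]
where $\lambda = \max_{M \in \cM(\cP)}\|M\|_1$, so that $\cQ \subseteq [-1,1]^d\times[-1,1]^d$ and the particular reward sequence $r^1,\dots,r^T$ of a bad instance only enters through the optimizer's mixed play (it writes each $r^t \in [-1,1]^d$ as a convex combination of $\{-1,1\}^d$). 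With this fixed game $G = G(\cP, M^\star)$, the hypotheses of Lemma~\ref{lem:linear} hold for every $T \in \cT'$ with $R \ge \epsilon T$ and witness $M^\star$, so the optimizer can secure total reward at least $\Val(G)T + \epsilon T/(\lambda+1)$. Since $\lambda$ depends only on $\cP$, this is $\Val(G)T + \Omega(T)$, as required.

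The main obstacle is precisely this pigeonhole step: the theorem demands a \emph{single} game $G$ independent of $T$, so one needs the set of possible witnesses in Lemma~\ref{lem:linear} to be finite. This is what makes the polytope hypothesis on $\cP$ indispensable, since it guarantees that $\cM(\cP)$ is a polytope and therefore that its extreme points form a finite set on which to pigeonhole. Once $M^\star$ is pinned down, the theorem becomes a direct invocation of the per-instance lemma with a constant regret density.
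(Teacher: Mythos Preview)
Your proposal is correct and follows essentially the same approach as the paper's proof: both argue that $\cM(\cP)$ is a polytope (via the linear constraints $AMv \le b$ for each vertex $v$ of $\cP$), observe that the linear-swap-regret objective is linear in $M$ and hence maximized at a vertex of $\cM(\cP)$, and then pigeonhole over the finitely many vertices to obtain a single witness $M^\star$ (and hence a single game $G$) that works for infinitely many $T$. Your explicit description of $\cQ$ matches the construction inside Lemma~\ref{lem:linear}, since $\operatorname{conv}\{(v,\tfrac{1}{\lambda+1}((M^\star)^\top - I)v):v\in\{-1,1\}^d\}$ is exactly the image of $[-1,1]^d$ under the affine map used there.
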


\subsection{Polytope swap regret}
\label{sec:psr}
We now define a second notion of swap regret for $\cP$-learning algorithms which we call \textit{polytope swap regret}. Whereas having low linear swap regret is a necessary condition to guarantee that the optimizer receives at most $\Val(G)T + o(T)$ utility, we will show that having low polytope swap regret is a sufficient condition for the same guarantee.

The intuition behind polytope swap regret is that we want to compete against an arbitrary swap function $\pi$ on the vertices of $\cP$ (i.e., a function that maps each vertex of $\cP$ to some other vertex). This makes sense if the learner only ever plays actions which are vertices of $\cP$, but it is less clear how $\pi$ should act on an interior point $x$ of $\cP$. One way to define an action of $\pi$ on $x$ is to write $x$ as a convex combination of vertices, use $\pi$ to map each of these vertices to (possibly) new vertices, and take the corresponding convex combination of these new vertices to obtain a new point $x'$. This works, but in many cases there will many ways to write $x$ as a convex combination of the vertices of $\cP$. When computing polytope swap regret, we will choose the best (regret-minimizing) decompositions of all actions $x^{i}$ in our given problem instance that minimizes the worst-case regret for the worst possible swap function $\pi$.

Formally, given a polytope $\cP$, let $\cV(\cP)$ be the set of vertices of $\cP$, and let $V = |\cV(\cP)|$ be the number of vertices of $\cP$. We say that $\rho \in \Delta(\cV(\cP))$ is a \textit{vertex decomposition} of a point $x \in \cP$ if $\sum_{v \in \cV(\cP)} \rho_{v}v = x$; likewise, given a vertex decomposition $\rho$, we will let $\overline{\rho} = \sum_{v \in \cV(\cP)} \rho_{v}v$ denote the point in $\cP$ for which $\rho$ is a vertex decomposition. 

We call functions $\pi:\cV(\cP) \rightarrow \cV(\cP)$ that map vertices of $\cP$ to vertices of $\cP$ \textit{vertex swap functions}. We extend $\pi$ to act on vertex decompositions by letting $\pi(\rho)_{v} = \sum_{v' \in \pi^{-1}(v)} \rho_{v'}$; note that under this definition, $\overline{\pi(\rho)} = \sum_{v \in \cV(\cP)}\rho_{v}\pi(v)$.

Finally, consider an instance of the $\cP$-learning problem with reward vectors $r^{1}, r^{2}, \dots, r^{T} \in [-1, 1]^d$, where a learner running algorithm $\cA$ plays actions $x^{1}, \dots, x^{T} \in \cP$. We define the polytope swap regret of $\cA$ on this instance as

$$\PolySwapReg = \left(\min_{\rho^{t}\,|\,\overline{\rho^{t}} = x^{t}} \max_{\pi : \cV(\cP) \rightarrow \cV(\cP)} \sum_{t=1}^{T} \left\langle r^{t}, \overline{\pi(\rho^{t})} \right\rangle \right) - \sum_{t=1}^{T} \langle r^{t}, x^{t} \rangle.$$

Here the outer minimum is over all sequences of vertex decompositions $\rho^{t}$ of the actions $x^{t}$, and the inner maximum is over all vertex swap functions $\pi$. An alternate way of thinking about this benchmark is in the form of a zero-sum game: the learner, after they have played all their actions $x^{t}$ (but before they see $\pi$) chooses a vertex decomposition $\rho^{t}$ for each of their actions. The adversary then observes these vertex decompositions and responds with the vertex swap function $\pi$ which maximizes the counterfactual utility of the transformed action sequence obtained by applying $\pi$ to each $\rho^{t}$. The learner wishes to choose their original decompositions to minimize this maximum counterfactual utility.

As always, we say the $\cP$-learning algorithm $\cA$ has low polytope swap regret if it incurs at most $o(T)$ polytope swap regret for all $\cP$-learning instances.

\begin{theorem}\label{thm:poly_regret}
Let $\cA$ be a $\cP$-learning algorithm with low polytope swap regret. Then if an optimizer plays $T$ rounds of a $\cP$-game $G$ against a learner running $\cA$, the optimizer can get at most $\Val(G)T + o(T)$ reward. 
\end{theorem}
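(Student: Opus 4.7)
My plan is to use the vertex decompositions guaranteed by the polytope swap regret definition to define, for each vertex $v$ of $\cP$, a ``per-vertex average optimizer commitment'' in $\cQ$ against which $v$ is an approximate best response, and then invoke (a continuous version of) the Stackelberg bound at each such commitment. Fix the $T$-round play $(q^t, x^t)$ with $q^t = (r^t, s^t)$, and let $\rho^t \in \Delta(\cV(\cP))$ be the vertex decompositions of $x^t$ attaining the minimum in the polytope swap regret definition, so $\max_{\pi} \sum_t \langle r^t, \overline{\pi(\rho^t)} - x^t \rangle \leq R = o(T)$. Set $w_v = \sum_t \rho^t_v$ and $\bar q_v = (\bar r_v, \bar s_v) := w_v^{-1} \sum_t \rho^t_v q^t \in \cQ$ (which lies in $\cQ$ by convexity); by linearity, the optimizer's total payoff equals $\sum_v w_v \langle \bar s_v, v \rangle$.

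Now I would pick the vertex swap function $\pi(v) \in \argmax_{v' \in \cV(\cP)} \langle \bar r_v, v'\rangle$ (breaking ties in favor of $\langle \bar s_v, v'\rangle$); applying the polytope swap regret bound to this specific $\pi$ yields $\sum_v w_v \,\mathrm{gap}_v \leq R$, where $\mathrm{gap}_v := \langle \bar r_v, \pi(v) - v\rangle \geq 0$, i.e., each $v$ is a $\mathrm{gap}_v$-approximate learner best response to the commitment $\bar q_v$. Introduce the $\varepsilon$-Stackelberg value $\Val_\varepsilon(G)$, defined as the maximum of $\langle s(q), x\rangle$ over $q \in \cQ$ and $x \in \cP$ with $x$ an $\varepsilon$-approximate best response to $r(q)$. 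Since $v$ is a $\mathrm{gap}_v$-best response to $\bar q_v$ we get $\langle \bar s_v, v\rangle \leq \Val_{\mathrm{gap}_v}(G)$, and hence $\sum_t \langle s^t, x^t\rangle \leq \sum_v w_v \Val_{\mathrm{gap}_v}(G)$. Thresholding at a level $\delta > 0$: vertices with $\mathrm{gap}_v \geq \delta$ carry weight at most $R/\delta$ (Markov), contributing $O(R/\delta)$ to the sum; vertices with $\mathrm{gap}_v < \delta$ contribute at most $T\,\Val_\delta(G)$ by monotonicity of $\Val_\varepsilon$ in $\varepsilon$. Choosing $\delta = \sqrt{R/T}$ makes $R/\delta = O(\sqrt{TR}) = o(T)$, leaving only $T\,\Val_\delta(G)$ on the leading order.

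The main obstacle is the final continuity step: showing $\Val_\varepsilon(G) \to \Val(G)$ as $\varepsilon \to 0^+$ under optimizer-favorable tie-breaking, so that $T(\Val_\delta(G) - \Val(G)) = o(T)$ for $\delta \to 0$. The best-response correspondence is not continuous in general, but on the polytope $\cP$ the exact best-response set at any fixed $r$ is a face of $\cP$, and the optimizer's utility $\langle s, \cdot\rangle$ is linear on that face, so its optimizer-favorable maximum is achieved at a vertex and is stable under small $\varepsilon$-relaxations of the best-response constraint. A cleaner alternative that avoids this continuity step altogether is to reduce directly to the standard-game theorem of \cite{deng2019strategizing}: embedding the $T$-round play in the bimatrix game $G^\star$ whose pure actions are the vertices of $\cQ$ and $\cP$ with payoffs $\langle r(q), v\rangle$ and $\langle s(q), v\rangle$, the optimal decomposition $\rho^t$ turns polytope swap regret into ordinary swap regret, $\Val(G^\star) = \Val(G)$ (because a linear function on a polytope is maximized at a vertex), and the desired $\Val(G)T + o(T)$ bound is then immediate from the standard-game result.
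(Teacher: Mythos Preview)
Your proposal is correct and follows essentially the same route as the paper: take the regret-minimizing vertex decompositions $\rho^t$, form the per-vertex averaged optimizer actions $\bar q_v = (\bar r_v, \bar s_v) \in \cQ$ with weights $w_v = \sum_t \rho^t_v$, rewrite the optimizer's total payoff as $\sum_v w_v \langle \bar s_v, v\rangle$, and apply the polytope swap regret bound to the vertex swap $\pi(v)=$ learner's best response to $\bar r_v$. The paper then argues by contradiction --- assuming the payoff exceeds $(\Val(G)+\gamma)T$, it obtains $\sum_v w_v \langle \bar s_v, v - \pi(v)\rangle \geq \gamma T$ and declares this ``contradicts'' the swap-regret inequality $\sum_v w_v \langle \bar r_v, \pi(v) - v\rangle \leq o(T)$ --- whereas you pass through the $\varepsilon$-approximate Stackelberg value and a threshold/Markov argument.

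Your extra care is warranted: the paper's last line juxtaposes an inequality in $\bar s_v$ with one in $\bar r_v$, and some bridge is needed to turn ``small learner suboptimality'' into ``small optimizer excess''. Your $\Val_\varepsilon(G)\to\Val(G)$ step supplies exactly that bridge, and it holds because the sets $\{(q,x)\in\cQ\times\cP : x \text{ is an }\varepsilon\text{-BR to } r(q)\}$ are compact and decrease to the exact best-response graph as $\varepsilon\downarrow 0$, so the maximum of the continuous function $\langle s(q),x\rangle$ over them converges. Your alternative reduction to the bimatrix game $G^\star$ on $\cV(\cQ)\times\cV(\cP)$ is also sound, with $\Val(G^\star)=\Val(G)$ for the reason you give; just note that you are invoking the \emph{per-instance} form of the standard-game bound (swap regret $o(T)$ on this transcript implies optimizer payoff $\leq \Val\cdot T + o(T)$), since the $\rho^t$ are chosen after the run rather than by an online algorithm.
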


\subsection{Separating linear and polytope swap regret in Bayesian games}
We establish separation of linear and polytope swap regrets in Appendix~\ref{sec:linpolysep} and pose an open question about polytope swap regret (Question~\ref{qn:polyneces}).

\bibliography{sbg}

\newpage
\appendix

\section{Related Work}
\label{sec:app_related}
There is a very large amount of literature on the outcome of interaction between strategic agents in single-shot and repeated games, offering us reasonably complete picture of the landscape here. Likewise, when learning agents interact repeatedly in a game, we have a good understanding of what equilibria they lead to, depending on the learning algorithms employed by the learners (more on this below). However, the nature of this interaction between a strategic agent and a learner is much less studied, and has gained momentum only in the last few years. The work closest to ours in this space is that of~\cite{deng2019strategizing} who initiate the study of this optimizer-learner interaction and draw some very interesting conclusions detailed earlier. The recent work of~\cite{braverman2018selling} is also quite close to ours. 
They study the specific 2-player Bayesian game of an auction between a single seller and single buyer. The seller’s choice of the auction to run represents his action, and the buyer’s bid represents her
action. Our work generalizes both~\cite{deng2019strategizing} and~\cite{braverman2018selling} by studying general Bayesian games, and also addresses questions beyond what was asked in those works. Both~\citeauthor{deng2019strategizing} and~\citeauthor{braverman2018selling} show that regardless of the specific algorithm used by the learner (buyer), as long as the buyer plays a no-regret learning algorithm, the optimizer (seller) can always earn at least the Stackelberg value in a single
shot game. Our Lemma~\ref{lem:stack_achieve} in Appendix is a direct generalization of both these results to arbitrary Bayesian games without any structure. Both~\citeauthor{deng2019strategizing} and~\citeauthor{braverman2018selling} show that there exist no-regret strategies for the learner (buyer)
that guarantee that the optimizer (seller) cannot get anything better than the single-shot optimal payoff. Our Theorem~\ref{thm:poly_regret} on polytope swap regret is a directly generalization of both these results showing that when learner has a low polytope swap regret, the optimizer cannot earn much more than the Bayesian Stackelberg value of the game. Neither~\citeauthor{deng2019strategizing} nor~\citeauthor{braverman2018selling} provide necessary and sufficient conditions for the learner to cap the optimizer payoff at Stackelberg value. We provide such necessary and sufficient conditions for standard games in\footnote{Theorem~\ref{thm:main_standard} establishes necessity, as sufficiency is already known from~\cite{deng2019strategizing}.} Theorem~\ref{thm:main_standard}. We provide a partial answer for Bayesian games, where we get a sufficient condition that we conjecture to be necessary as well.

To discuss literature on the outcome of learning agents interacting with each other, we begin with the different notions of regret. The usual notion of regret, without the swap qualification, is often referred to as external-regret (see~\cite{Hannan57},~\cite{FV93},~\cite{LW94},~\cite{FS97},~\cite{FS99},~\cite{CFHHSW97}). There is a stronger notion of regret called internal regret that was defined earlier in~\cite{FV98}, which allows all occurrences of a given action $x$ to be replaced by another action $y$. Many no-internal-regret algorithms have been designed (see for example~\cite{HM00},~\cite{FV97,FV98,FV99},~\cite{CL03}). The still stronger notion of swap regret was introduced in~\cite{BM05}, and it allows one to simultaneously swap several pairs of actions.~\citeauthor{BM05} show how to efficiently convert a no-regret algorithm to a no-swap-regret algorithm. One of the reasons behind the importance of internal and swap regret is their close connection to the central notion of correlated equilibrium introduced by~\cite{Aumann74}. In a general $n$ players game, a distribution over action profiles of all the players is a correlated equilibrium if every player has zero internal regret. When all players use algorithms with no-internal-regret guarantees, the time averaged strategies of the players converges to a correlated equilibrium (see~\cite{HM00}). When all players simply use algorithms with no-external-regret guarantees, the time averaged strategies of the players converges to the weaker notion of coarse correlated equilibrium. When the game is a zero-sum game, the time-averaged strategies of players employing no-external-regret dynamics converges to the Nash equilbrium of the game.

On the special $2$-player game of selling to a buyer in an auction,~\cite{ADMS18} study a setting similar to~\citeauthor{braverman2018selling} but also consider other types of buyer behavior apart from learning, and show to how to robustly optimize against various buyer strategies in an auction.

\section{Separating linear and polytope swap regrets}
\label{sec:linpolysep}
In Sections~\ref{sec:linswapreg} and~\ref{sec:psr}, we presented two different definitions of swap regret -- linear swap regret and polytope swap regret -- for the very general setting of polytope learning. Together, they form necessary and sufficient conditions for which learning algorithms are robust to strategic behavior in polytope games: algorithms with low polytope swap regret are always robust, whereas algorithms with high linear swap regret are not robust. Interestingly, when $\cP$ is the simplex $\Delta([N])$ both these notions of swap regret are equal, and they both reduce to the ordinary notion of swap regret in online learning.  

We now return our attention to the setting of Bayesian games and contextual learning, where the relevant polytope $\cP = \Delta([N])^{C}$ is the product of $C$ $N$-simplices. Just as swap regret is the ``correct'' regret definition for characterizing robustness in standard games, we want to understand what is the correct definition of regret for characterizing robustness in Bayesian games. Is it polytope swap regret, or linear swap regret, or some as-yet-undefined notion that lies between these two regret measures?

We do not yet know the answer to this question, but we provide some partial progress towards resolving it. To begin, we show that unlike for standard games, for the Bayesian game polytope polytope swap regret and linear swap regret can differ significantly (even when $K=C=2$).

\begin{theorem} \label{thm:poly_lin_sep}
Let $\cP = \Delta([N])^{C}$. There exists a $\cP$-learning instance with $K=C=2$ where $\PolySwapReg = \Omega(T)$ and where $\LinSwapReg = 0$.
\end{theorem}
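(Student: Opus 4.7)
The plan is to construct an explicit $4$-round block which is repeated $T/4$ times. In each block the learner plays every vertex of $\cP=\Delta([2])^{2}$ exactly once, so each per-round decomposition $\rho^{t}$ is uniquely forced and the $\min_{\rho}$ in the definition of polytope swap regret trivialises. The rewards will be chosen so that the optimal vertex swap $\pi^{*}$ yields a constant per-block gain, yet no linear contraction $M\in\cM(\cP)$ can improve on the identity map.

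Let $v_{ij}\in\cP$ denote the vertex representing action $i$ in context $1$ and action $j$ in context $2$, and list coordinates in $\mathbb{R}^{4}$ in the order $((1,1),(1,2),(2,1),(2,2))$. Within each block, set
\[
x^{1}=v_{11},\;\; x^{2}=v_{22},\;\; x^{3}=v_{12},\;\; x^{4}=v_{21},
\]
\[
r^{1}=\tfrac{1}{2}(1,0,0,1),\;\; r^{2}=\tfrac{1}{2}(0,1,-1,2),\;\; r^{3}=\tfrac{1}{2}(1,0,1,0),\;\; r^{4}=\tfrac{1}{2}(0,1,2,-1).
\]
Since every $x^{t}$ is a vertex, $\PolySwapReg$ equals $\max_{\pi}\sum_{t}\langle r^{t},\pi(x^{t})\rangle-\sum_{t}\langle r^{t},x^{t}\rangle$, and the maximum decouples into an independent choice of $\pi(v)$ per source vertex. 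A direct calculation picks out
\[
\pi^{*}(v_{11})=v_{12},\;\pi^{*}(v_{12})=v_{11},\;\pi^{*}(v_{22})=v_{22},\;\pi^{*}(v_{21})=v_{21},
\]
with per-block gain $\tfrac12+0+\tfrac12+0=1$, yielding $\PolySwapReg\geq T/4$ over the whole instance.

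For the linear side, any $M\in\cM(\cP)$ is determined on $\cP$ by its four vertex images $y_{ij}:=Mv_{ij}\in\cP$, which by linearity must satisfy $y_{11}+y_{22}=y_{12}+y_{21}$. By periodicity, the per-block linear swap regret equals the maximum of $\sum_{v}\langle r^{t(v)},y_{v}-v\rangle$ over $(y_{v})\in\cP^{4}$ subject to this relation, where $t(v)$ is the round with $x^{t(v)}=v$. Using Lagrangian multiplier $\lambda=\tfrac12(0,0,-1,1)$ to relax the constraint, a direct computation gives $r^{1}-\lambda=\tfrac12 v_{11}$, $r^{2}-\lambda=\tfrac12 v_{22}$, $r^{3}+\lambda=\tfrac12 v_{12}$, and $r^{4}+\lambda=\tfrac12 v_{21}$, each of which is uniquely maximised over $\cP$ at $y_{v}=v$. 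By weak duality (in fact strong duality for this LP), the constrained primal is at most $0$, and since $y_{v}=v$ is feasible with value $0$, we obtain $\LinSwapReg=0$ per block, hence overall.

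The main obstacle is the simultaneous satisfaction of three constraints on the rewards: (a) per-source-vertex strict improvement, so that $\pi^{*}$ beats the identity; (b) non-realisability of $\pi^{*}$ as a linear contraction, witnessed by $\pi^{*}(v_{11})+\pi^{*}(v_{22})=v_{12}+v_{22}\neq v_{11}+v_{21}=\pi^{*}(v_{12})+\pi^{*}(v_{21})$; and (c) existence of a Lagrangian certificate making the identity optimal over all linear contractions. The particular reward scheme succeeds because $\pi^{*}$ acts asymmetrically on the diagonal vertices $\{v_{11},v_{22}\}$ and the anti-diagonal vertices $\{v_{12},v_{21}\}$, which is precisely the contextually correlated transformation of the learner's strategy that no linear map on a product-of-simplices polytope can express.
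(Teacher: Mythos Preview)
Your proof is correct, and the construction with its Lagrangian certificate is clean. It is worth noting, though, that you take a genuinely different route from the paper.

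The paper's instance is simpler: the learner plays $v_{11},v_{12},v_{21},v_{22}$ in blocks of $T/4$, and the rewards are just $v_{11},v_{12},v_{21},v_{11}$ respectively (the last reward deliberately mismatched). The optimal vertex swap moves only $v_{22}\mapsto v_{11}$. For $\LinSwapReg=0$, the paper simply enumerates the $64$ extreme points of $\cM(\cP)$ and checks that none beats the identity. Your construction instead uses a different (and slightly more elaborate) reward scheme with negative entries, a swap $\pi^{*}$ that exchanges $v_{11}\leftrightarrow v_{12}$ while fixing the other two vertices, and---this is the nicer part---a closed-form dual certificate $\lambda=\tfrac12(0,0,-1,1)$ that shows analytically, rather than by exhaustive search, that the identity is optimal among linear contractions. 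The paper's version is easier to state; yours gives more structural insight into \emph{why} no linear map can help, since the shifted objectives $r^{t(v)}\pm\lambda$ are literally (half of) the played vertices themselves. One small point: the last paragraph's non-realisability check is commentary rather than a logically required step, since the LP-duality argument already rules out \emph{all} linear contractions, not just $\pi^{*}$.
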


This separation implies that at least one of polytope swap regret and linear swap regret do not tightly characterize robustness against strategic behavior in Bayesian games. It turns out (as we discovered through computational search) that linear swap regret is not tight: it is possible to extend the example in Theorem \ref{thm:poly_lin_sep} to a Bayesian game where the optimizer can get $\Omega(T)$ more than their Stackelberg value, despite the learner running a low linear regret learning algorithm.

\begin{theorem} \label{thm:poly_lin_sep_bayesian}
Let $\cP = \Delta([N])^{C}$. There exists a $\cP$-game (i.e., Bayesian game) $G$ with $K=C=2$ and a low linear regret $\cP$-learning algorithm (i.e., contextual learner) $\cA$ where if an optimizer plays $T$ rounds of $G$ against a learner running $\cA$, the optimizer can get reward at least $V(G)T + \Omega(T)$.
\end{theorem}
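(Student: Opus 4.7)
The plan is to promote the per-instance separation of Theorem~\ref{thm:poly_lin_sep} into a full Bayesian game example, following the blueprint of Lemma~\ref{lem:standard} for standard games but paying careful attention to where the $\cP = \Delta([N])^C$ polytope structure prevents the construction from going through verbatim. I would start by fixing the $\cP$-learning instance witnessing Theorem~\ref{thm:poly_lin_sep} (with $K = C = 2$): reward vectors $r^1, \ldots, r^T$, learner actions $x^1, \ldots, x^T \in \cP$, associated vertex decompositions $\rho^{1*}, \ldots, \rho^{T*}$, and a vertex swap $\pi^*$ satisfying $\sum_t \langle r^t, \overline{\pi^*(\rho^{t*})} - x^t\rangle = \Omega(T)$ while $\sum_t \langle r^t, M x^t \rangle \leq \sum_t \langle r^t, x^t \rangle$ for every $M \in \cM(\cP)$.

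Next, I would build a Bayesian game $G$ whose optimizer action set is finite (via a Carath\'eodory-style reduction analogous to the closing paragraph of Lemma~\ref{lem:standard}) and whose learner payoff matrices $u_L(\cdot,\cdot,c)$ are chosen so that some dynamic optimizer strategy $\alpha^1, \ldots, \alpha^T$ induces exactly the reward vectors $r^t$ for the learner. The optimizer payoff $u_O$ would be designed so that $\sum_t u_O(\alpha^t, x^t) = \sum_t \langle r^t, \overline{\pi^*(\rho^{t*})} - x^t \rangle = \Omega(T)$. The cleanest attempt is to set $u_O(i,j,c) = h_i(c) - u_L(i,j,c)$, giving $u_O(\alpha, x) = H(\alpha) - \langle r_{\alpha}, x\rangle$, and then calibrate the ``offsets'' $h_i(c)$ (with $M$ enlarged by Carath\'eodory if necessary so the optimizer's encoding of $r^t$ and the required offset can be chosen jointly) so that across all $T$ rounds the contribution of $H(\alpha^t)$ aggregates to $\sum_t \langle r^t, \overline{\pi^*(\rho^{t*})}\rangle$. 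To upper-bound $\Val(G)$, I would reuse the best-response trick from Lemma~\ref{lem:standard}: a single-shot Stackelberg learner best-responds to any commitment $\alpha$ with an action $j^*$ satisfying $u_L(\alpha, j^*) \geq u_L(\alpha, \pi^*(j^*))$, which in turn forces the optimizer's per-shot payoff to be non-positive and hence $\Val(G) T = o(T)$.

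The central obstacle is twofold. First, unlike in standard games where a vertex swap on $\Delta([N])$ is automatically a linear map, vertex swaps on $\cP = \Delta([N])^C$ are generically non-linear, which is precisely what drives the separation in Theorem~\ref{thm:poly_lin_sep}; this means the per-round offset $H(\alpha^t)$ cannot literally equal $\langle r^t, \overline{\pi^*(\rho^{t*})}\rangle$ through a direct bilinear encoding, and one must rely on Carath\'eodory slack to let the optimizer's mixed strategy encode both $r^t$ and the correct constant offset simultaneously. Second, and more importantly, Theorem~\ref{thm:poly_lin_sep} only produces a single action trajectory, whereas Theorem~\ref{thm:poly_lin_sep_bayesian} demands a genuine $\cP$-learning algorithm $\cA$ that plays $x^t$ on input $r^{1:T}$ yet keeps $o(T)$ linear swap regret on \emph{every} reward sequence. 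A natural attempt is a hybrid algorithm: shadow $x^t$ while the observed reward history matches the critical prefix $r^{1:t-1}$, and otherwise fall back to a baseline low-linear-swap-regret algorithm constructed by discretizing $\cM(\cP)$ and running a Blum-Mansour-style reduction against that finite benchmark class. The delicate point is verifying that the hybrid retains low linear swap regret globally, since a linear contraction chosen in hindsight can couple the shadowed prefix with the baseline tail in ways neither segment alone would suffer; this is where I expect the main difficulty to lie, and the authors' remark that the example was found by computational search suggests that in their concrete $2\times 2$ construction the algorithm $\cA$ is specified directly and its low linear swap regret is verified by a small-dimensional calculation rather than by a general hybrid argument.
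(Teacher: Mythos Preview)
Your main proposed route---extending Lemma~\ref{lem:standard} by writing $u_O(\alpha,x) = H(\alpha) - \langle r_\alpha, x\rangle$ with offsets calibrated to $\langle r^t, \overline{\pi^*(\rho^{t*})}\rangle$---does not go through, and this is precisely the obstruction you flag as ``obstacle~1.'' In the standard-game construction the key identity $u_O(\alpha,j) = \tfrac{1}{2}(u_L(\alpha,\pi(j)) - u_L(\alpha,j))$ is bilinear because $\pi$ acts linearly on $\Delta([N])$, and the Stackelberg bound $\Val(G)\le 0$ falls out of the best-response inequality $u_L(\alpha,j)\ge u_L(\alpha,\pi(j))$. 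Once $\pi^*$ is a vertex swap on $\Delta([N])^C$ \emph{not} realized by any $M\in\cM(\cP)$, $\langle r, \overline{\pi^*(\rho)}\rangle$ is not a function of $x=\overline{\rho}$ alone, so no offset $H(\alpha)$ linear in the optimizer's mixture can encode it. Your Carath\'eodory patch (enlarge the optimizer's actions to pairs $(r,h)$) then needs the constraint $h \le \max_{x\in\cP}\langle r,x\rangle$ to keep $\Val(G)\le 0$, but the right-hand side is convex in $r$, so this constraint set is not convex and is not preserved under mixtures; and if you enforce it only at the vertices $(r^t,H(\alpha^t))$, a Stackelberg optimizer mixing among them can violate it. In short, the structural route that works for standard games and for linear swap regret (Lemma~\ref{lem:linear}) is exactly what breaks here, and there is no cheap repair.

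The paper's proof is the one you anticipate in your closing sentence: it abandons structure entirely. It reuses the four-phase learner trajectory and learner-reward vectors $r_1,\dots,r_4$ from Theorem~\ref{thm:poly_lin_sep} verbatim, gives the optimizer four pure actions with $\cQ=\conv\{(r_i,s_i)\}_{i=1}^4$, and simply exhibits explicit numerical vectors $s_1,\dots,s_4\in[0,1]^4$ (found by computer search) for which a direct finite check gives $\Val(G)=0.74$ while the dynamic play yields $0.7575$ per round. No best-response trick, no $\Val(G)\le 0$, no relationship between $u_O$ and $u_L$---just a certificate. Regarding your ``obstacle~2,'' the paper is as terse as you are: it implicitly uses the hybrid you describe, and for this particular instance the hybrid is clean because every prefix of the four-phase trajectory on the prescribed rewards already has zero linear swap regret (the learner plays optimally through the first three phases, and one checks $M=I$ remains optimal for any partial fourth phase by linearity in the phase length), so a deviation at any time $t_0$ leaves a zero-regret prefix and an $o(T)$-regret fallback suffix, and these add.
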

its
On the other hand, we are unable to find any example of a contextual learning instance where a learner incurs high polytope swap regret that can be extended to a game where the optimizer \textit{cannot} get more than their Stackelberg value. We conjecture that polytope swap regret is the ``correct'' notion of regret, both for Bayesian games and more generally for polytope games. Specifically, we pose the following open question:

\begin{question}
\label{qn:polyneces}
Let $\cA$ be a contextual learning algorithm which does not have low polytope swap regret. Is it possible for an optimizer to get at least $\Val(G)T + \Omega(T)$ reward when playing $T$ rounds of a Bayesian game $G$ against an optimizer running $\cA$?
\end{question}

\section{Algorithmic considerations}
\label{sec:algcons}
\subsection{A low polytope swap regret algorithm} \label{sec:lowpolyalg}

Thus far, we have focused on characterizing relevant definitions of regret for the learner without actually discussing how to construct learning algorithms that minimize these forms of regret. We remedy this here by showing how to convert a traditional low-swap-regret algorithm to an algorithm for polytope learning which obtains $o(T)$ polytope swap regret (and hence $o(T)$ linear swap regret as well). 

Let $\cA$ be a low-swap-regret algorithm for online swap regret (e.g., the one presented in \cite{BlumMansour2007}). The idea is simple: if $\cA'$ is faced with a $\cP$-learning instance, $\cA'$ begins by initializing a copy of $\cA$ with one action for each vertex $v \in \cV(\cP)$. At the beginning of each round $t$, $\cA'$ queries $\cA$ for a mixed strategy $\beta^t \in \Delta(\cV(\cP))$ and then plays the corresponding convex combination of the vertices of $\cP$, the point $x^{t} = \sum_{v \in \cV(\cP)} \beta^t_{v} v \in \cP$. Then $\cA'$ will compute the reward $\langle r^{t}, v\rangle$ for each vertex $v$ and pass this collection of rewards to $\cA$.

We show that swap regret guarantees for $\cA$ directly translate to polytope swap regret guarantees for $\cA'$.

\begin{theorem}
Let $\cA$ be an online learning algorithm which incurs swap regret at most $R(N, T)$ on instances with $N$ actions and $T$ rounds. Then $\cA'$ incurs polytope swap regret at most $R(|\cV(\cP)|, T)$ on $\cP$-learning instances over $T$ rounds.
\end{theorem}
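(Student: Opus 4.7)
The plan is to show that the construction of $\cA'$ makes the polytope swap regret of $\cA'$ on a $\cP$-learning instance exactly equal to (a particular instantiation of) the ordinary swap regret of $\cA$ on an induced online learning instance with $|\cV(\cP)|$ actions, so that the claimed bound follows by direct invocation of the hypothesis on $\cA$.

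First, I would set up the induced instance. Given the $\cP$-learning reward sequence $r^1,\dots,r^T \in [-1,1]^d$, define a companion online learning instance with action set $\cV(\cP)$ where the reward of action $v \in \cV(\cP)$ in round $t$ is $\tilde r^t_v = \langle r^t, v\rangle$. By construction, in round $t$ the algorithm $\cA$ selects $\beta^t \in \Delta(\cV(\cP))$ and $\cA'$ plays $x^t = \sum_{v} \beta^t_v v$, so the two instantaneous rewards agree:
\[
\langle r^t, x^t\rangle \;=\; \sum_{v \in \cV(\cP)} \beta^t_v \langle r^t, v\rangle \;=\; \sum_{v} \beta^t_v \tilde r^t_v.
\]
Thus the total realized reward of $\cA'$ on the $\cP$-learning instance equals the total reward of $\cA$ on the companion instance.

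Next, I would relate the counterfactual (benchmark) terms. The key observation is that $\beta^t$ is itself a valid vertex decomposition of $x^t$, i.e.\ $\overline{\beta^t}=x^t$, so it is admissible in the outer minimization defining $\PolySwapReg$. For any vertex swap function $\pi:\cV(\cP)\to \cV(\cP)$, expand using the definition $\overline{\pi(\beta^t)} = \sum_{v} \beta^t_v \pi(v)$:
\[
\sum_{t=1}^{T} \langle r^t, \overline{\pi(\beta^t)}\rangle \;=\; \sum_{t=1}^{T}\sum_{v \in \cV(\cP)} \beta^t_v \langle r^t, \pi(v)\rangle \;=\; \sum_{t=1}^{T}\sum_{v} \beta^t_v \tilde r^t_{\pi(v)},
\]
which is exactly the swap-regret benchmark of $\cA$ on the companion instance, under the swap function $\pi$ viewed as a map on the action set $\cV(\cP)$. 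Since the realized rewards also agree, choosing the particular decomposition $\rho^t = \beta^t$ in the definition of $\PolySwapReg$ gives
\[
\PolySwapReg(\cA') \;\le\; \max_{\pi} \sum_{t}\sum_{v} \beta^t_v \tilde r^t_{\pi(v)} \;-\; \sum_{t}\sum_{v}\beta^t_v \tilde r^t_v \;=\; \SwapReg(\cA) \;\le\; R(|\cV(\cP)|, T),
\]
where the last inequality uses the hypothesis applied to the companion instance (which has $|\cV(\cP)|$ actions and $T$ rounds).

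There is essentially no difficulty here beyond careful bookkeeping: the only subtlety is confirming that the companion reward vectors $\tilde r^t$ lie in whatever range $R(N,T)$ was stated for. Since $r^t \in [-1,1]^d$ and each vertex $v \in \cP \subseteq [-1,1]^d$, we have $|\tilde r^t_v|\le d$, so if the swap-regret bound for $\cA$ is stated for rewards in $[0,1]$ one needs to rescale and shift the rewards (the polytope swap regret is invariant under shifting all $\tilde r^t_v$ by the same constant, and scales linearly with the reward magnitude); this is the only place a small additional factor may appear, and it is the main point to handle cleanly in the write-up.
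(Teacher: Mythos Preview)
Your proposal is correct and follows essentially the same argument as the paper: use the mixed strategy $\beta^t$ output by $\cA$ as the vertex decomposition $\rho^t$ in the definition of $\PolySwapReg$, then observe that the resulting expression is exactly the ordinary swap regret of $\cA$ on the induced instance with $|\cV(\cP)|$ actions. Your added remark about the range of the induced rewards $\tilde r^t_v$ is a detail the paper does not explicitly address; it is a fair point to flag, though it does not affect the $o(T)$ conclusion.
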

\begin{proof}
Let $\beta^{t} \in \Delta(\cV(\cP))$ be the mixed strategy output by $\cA$ in round $t$, and let $x^t = \sum_{v \in \cV(\cP)} \beta^t_{v} v$ be the point in $\cP$ played by $\cA'$ in round $t$. Note that $\beta^{t}$ is a vertex partition of $x^{t}$. If we use these vertex partitions in the definition of polytope swap regret, we find that

$$\PolySwapReg \leq \left(\max_{\pi : \cV(\cP) \rightarrow \cV(\cP)} \sum_{t=1}^{T} \left\langle r^{t}, \overline{\pi(\beta^{t})} \right\rangle \right) - \sum_{t=1}^{T} \langle r^{t}, x^{t} \rangle.$$

If we rewrite the RHS of the above equation (decomposing by vertices in the same manner as in the proof of Theorem \ref{thm:poly_regret}), we have that

$$\PolySwapReg \leq \max_{\pi : \cV(\cP) \rightarrow \cV(\cP)}\sum_{v \in \cV(\cP)} \sum_{t=1}^{T} \beta^{t}_{v}\left\langle r^{t}, \pi(v) - v \right\rangle.$$

\noindent
But the right hand side of the above equation is exactly the value of $\SwapReg$ faced by $\cA$. This regret in turn is at most $R(|\cV(\cP)|, T)$ by the guarantees of $\cA$.
\end{proof}

\subsection{Efficient learning algorithms for Bayesian games}\label{sec:bayesian_algs}

The reduction in the previous section produces an algorithm incurs at most $O(\sqrt{TV\log V})$ polytope swap regret and runs in time $O(\poly(V))$ per round, where $V = |\cV(\cP)|$ is the number of vertices of the polytope $\cP$. While for some polytopes (e.g. the simplex $\Delta([N])$) these bounds are reasonable, some common polytopes have an exponentially large number of vertices (even those with a compact representation as the intersection of a small number of half-spaces). Most relevant to us, the Bayesian game polytope $\cP = \Delta([N])^C$ has $V = N^C$ vertices, which is exponential in $C$. This dependence makes this algorithm unusable in settings with even a moderate number of contexts. 

In this section we present two efficient contextual learning algorithms: one that  simply runs a low-swap-regret algorithm independently for each context, and a more complex algorithm which incorporates the idea of swapping ``between'' different contexts. While we do not show these algorithms are low-polytope regret (the first one provably is not), we do show that these algorithms are somewhat robust to strategic behavior by the optimizer, in that we can still upper bound the maximum reward of optimizer as a function of the game $G$.

\subsubsection{Running a low-swap-regret algorithm per context}

We begin by analyzing the simple algorithm that runs an independent low-swap-regret algorithm for each context (Algorithm \ref{alg:alg1}).

\begin{algorithm2e}[h]
  \caption{Running a low-swap-regret algorithm per context.} \label{alg:alg1}
  Learner initializes $C$ copies ($\mathcal{A}_1, \mathcal{A}_2, \dots \mathcal{A}_{C}$) of a low-swap-regret algorithm over $N$ arms and $T$ rounds. \\
  \For{$t \gets 1$ \KwTo $T$}{
    \For{$c \gets 1$ \KwTo $C$}{
        Learner receives mixed strategy $\beta^t_{c} \in \Delta([N])$ from algorithm $\cA_{c}$.\\
    }
    Learner plays $\beta^t(c):[C]\rightarrow\Delta([N])$ given by $\beta^t(c) = \beta^t_{c}$. \\
    Optimizer plays mixed strategy $\alpha^t \in \Delta([M])$.\\
    \For{$c \gets 1$ \KwTo $C$}{
        Learner updates algorithm $\cA_{c}$ with the rewards $u_{L}(\alpha^t, j, c)$ for $j \in [N]$.
    }
  }
\end{algorithm2e}

Since running a low external-regret algorithm per context results in a contextual learning algorithm with low external-regret, one may naturally suspect that running a low internal-regret algorithm per context should result in a contextual learning algorithm with low (polytope) swap-regret. Interestingly, this is not the case.

\begin{theorem}\label{thm:alg1bad}
There exist Bayesian games $G$ where if the optimizer plays $G$ for $T$ rounds against a learner running Algorithm \ref{alg:alg1} (for some choice of swap-regret algorithm), the optimizer can get at least $\Val(G)T + \Omega(T)$ reward.
\end{theorem}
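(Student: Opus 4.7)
The plan is to construct an explicit small Bayesian game $G$ together with a dynamic optimizer strategy such that, against Algorithm~\ref{alg:alg1} with an appropriate low-swap-regret base algorithm, the optimizer's total reward exceeds $\Val(G)T + \Omega(T)$. Along the way I would pin down what Algorithm~\ref{alg:alg1} actually guarantees and design a game where this guarantee lies strictly above $\Val(G)$.

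The guiding insight is that Algorithm~\ref{alg:alg1} only controls \emph{separable} vertex swaps on $\cP = \Delta([N])^{C}$---swaps of the form $\pi(v)_c = \pi_c(v_c)$ acting independently in each context. A per-context repetition of the~\cite{deng2019strategizing} correlated-equilibrium argument, applied to each context-$c$ subgame $G_c$, upper-bounds the optimizer's reward against Algorithm~\ref{alg:alg1} by
\[
V^{\mathrm{pc}}(G)\, T \;:=\; \sum_c p_c\, \Val(G_c)\, T \;+\; o(T),
\]
rather than by $\Val(G)T$: the per-context learner's low swap regret forces $\sum_t u_O(\alpha^t, \beta^t(c), c) \le \Val(G_c)T + o(T)$ by the Deng--Schneider--Sivan conditional-best-response analysis, and summing with weights $p_c$ yields the bound. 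Whenever the per-context Stackelberg-optimal strategies $\alpha_c^\ast$ are incompatible (no single mixed commitment achieves per-context Stackelberg in every context simultaneously), one has $\Val(G) < V^{\mathrm{pc}}(G)$, opening room for separation. I would therefore build a two-context game with $p_1 = p_2 = 1/2$ in which the per-context Stackelberg strategies are distinct pure actions $a_1, a_2$, and have the optimizer play the two-phase schedule ``$a_1$ for $T/2$ rounds, then $a_2$ for $T/2$ rounds''. Within each phase each per-context learner faces a stationary online learning problem, so the standard swap-regret guarantee of the base algorithm (e.g.\ the Blum--Mansour reduction applied to multiplicative weights, chosen as the ``for some choice of swap-regret algorithm'' from the theorem statement) drives $\cA_c$ to the phase-optimal best response within $o(T)$ rounds per phase, and the dynamic reward accumulates toward the per-context bound $V^{\mathrm{pc}}(G)T$.

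The main technical obstacle is payoff calibration: we need to force $\Val(G) < V^{\mathrm{pc}}(G)$ \emph{and} to ensure the two-phase strategy actually realizes $V^{\mathrm{pc}}(G)T$, and the same payoff entries govern both goals. A naive $2\times 2\times 2$ game tends to collapse these quantities, because any off-phase bonus needed to reach $V^{\mathrm{pc}}(G)T$ creates a new Stackelberg alternative (a single mixed commitment) that drags $\Val(G)$ up to match. The remedy I would use is to include an extra ``filler'' optimizer action whose role is to break such Stackelberg alternatives without changing what phased dynamic play can reach, and to design the per-context subgames so that each one admits multiple Stackelberg-optimal optimizer strategies with distinct best responses, letting the off-phase learner responses remain at per-context Stackelberg pairs despite being unreachable by any fixed commitment. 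The separation then follows from (i) applying the swap-regret bound within each phase (using strongly-adaptive swap regret, or just the Blum--Mansour guarantee on each half separately) to control the per-context learners' deviation from phase-optimal play by $o(T)$, and (ii) a finite case analysis of $\Val(G)$ enumerating the learner's best-response regions on the optimizer's mixed-strategy simplex to certify that $\Val(G)$ is strictly below the dynamic value by a constant.
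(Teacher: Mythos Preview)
Your proposal contains a genuine conceptual gap. You correctly identify that the relevant slack is $\Val(G) < \ConVal(G) = \sum_c p_c \Val(G_c)$, but your proposed mechanism for exploiting it cannot work. You argue that within each phase the per-context learners converge (by their low-swap-regret guarantee) to the phase-optimal best response, and that the resulting reward ``accumulates toward $V^{\mathrm{pc}}(G)T$''. But observe: in any phase where the optimizer plays a fixed $a$ and, in each context $c$, the learner plays $\BR_c(a)$, the optimizer's per-round reward is exactly $\sum_c p_c\, u_O(a,\BR_c(a),c)$, which is the Stackelberg value of committing to $a$ and hence is at most $\Val(G)$. Averaging over phases therefore yields at most $\Val(G)T$ plus whatever excess accrues during the $o(T)$ transition rounds you allow---so at best $\Val(G)T + o(T)$, not $\Val(G)T + \Omega(T)$. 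Your hedges about ``filler'' actions and ``multiple Stackelberg-optimal strategies with distinct best responses'' do not escape this: any configuration in which the learner is genuinely best-responding to the optimizer's current action is, by definition, dominated by $\Val(G)$.

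The paper's proof works for precisely the opposite reason: it exploits the fact that a mean-based per-context learner \emph{fails} to best-respond for $\Omega(T)$ rounds after a switch. Concretely (adapting \cite{braverman2018selling}), the optimizer plays a ``bait'' action for $T/2$ rounds that builds up a $\Theta(T)$ cumulative-reward lead for the learner's buy action, then switches to an exploit action; the learner continues to buy for another $\Omega(T)$ rounds while that lead erodes, and it is this lag---not convergence---that produces the $\Omega(T)$ excess. To repair your argument you would need to abandon the ``learner converges per phase'' picture and instead design the game so that the carried-over (non-best-response) behavior is profitable for the optimizer.
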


Although Theorem \ref{thm:alg1bad} shows that it is possible for an optimizer to gain significantly (over their Stackelberg value) by strategizing, we can upper bound the extent to which this occurs in terms of a different Stackelberg value-like benchmark. 

Given a Bayesian game $G$, let $G_{1}, G_{2}, \dots, G_{C}$ be the standard games induced by the $C$ different contexts. We define the per-context Stackelberg value $\ConVal(G)$ as the weighted average of the Stackelberg values of the games $G_{c}$:

$$\ConVal(G) = \sum_{c=1}^{C} p_{c}\Val(G_{c}).$$

Alternatively, one can interpret $\ConVal(G)$ as the maximum amount the optimizer can receive by playing a fixed strategy, if the optimizer's strategy too is allowed to depend on contexts. We show that $\ConVal(G)$ upper bounds the optimizer's per-round reward when playing against Algorithm \ref{alg:alg1}.

\begin{theorem}\label{thm:alg1bound}
Let $G$ be a Bayesian game. If an optimizer plays $G$ for $T$ rounds against a learner running Algorithm \ref{alg:alg1}, the optimizer will receive reward at most $\ConVal(G)T + o(T)$.
\end{theorem}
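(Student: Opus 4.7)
The plan is to decompose the optimizer's total reward by context and reduce the analysis to $C$ independent copies of a standard-game argument. Concretely, if the optimizer plays $\alpha^t\in\Delta([M])$ in round $t$ and the learner plays $\beta^t:[C]\to\Delta([N])$, then the optimizer's total reward can be written as
\[
\sum_{t=1}^T u_O(\alpha^t,\beta^t) \;=\; \sum_{c=1}^C p_c \sum_{t=1}^T u_O(\alpha^t,\beta^t(c),c).
\]
Since the copies $\cA_1,\dots,\cA_C$ inside Algorithm~\ref{alg:alg1} are maintained independently, for each fixed context $c$ the sequence $\beta^t(c)$ is produced by $\cA_c$ from the per-round reward vectors $r^t_c := (u_L(\alpha^t,1,c),\dots,u_L(\alpha^t,N,c))$ alone. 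In other words, for each $c$ the pair (optimizer playing $\alpha^t$, learner playing $\beta^t(c)$) is exactly an instance of the standard game $G_c$ played over $T$ rounds between the same optimizer and a learner running $\cA_c$.

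The second step is to invoke, for each context, the known sufficiency direction for standard games (the ``if'' half of Theorem~\ref{thm:main_standard}, from \cite{deng2019strategizing}): against a learner whose swap regret is $R_c = o(T)$, an optimizer's cumulative reward in a standard game $H$ is at most $\Val(H)\cdot T + R_c$. Applied to $G_c$, this yields
\[
\sum_{t=1}^T u_O(\alpha^t,\beta^t(c),c) \;\le\; \Val(G_c)\cdot T + R_c \;=\; \Val(G_c)\cdot T + o(T),
\]
for each $c\in[C]$, using that $\cA_c$ has $o(T)$ swap regret on every reward sequence (in particular on the one induced by the optimizer's strategy here).

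Combining these per-context bounds with the weights $p_c$ gives
\[
\sum_{t=1}^T u_O(\alpha^t,\beta^t) \;\le\; \sum_{c=1}^C p_c\bigl(\Val(G_c)\cdot T + o(T)\bigr) \;=\; \ConVal(G)\cdot T + o(T),
\]
which is the claimed bound. The only subtlety worth flagging is that the optimizer's strategy $\alpha^t$ may be adaptive and therefore couples the reward streams seen by different $\cA_c$'s; however, the per-context sufficiency result for standard games holds against arbitrary (even adaptive) optimizer strategies, so this coupling causes no difficulty. Thus the only real ``work'' is the bookkeeping around the context decomposition, and the proof reduces cleanly to $C$ applications of the standard-game sufficiency theorem.
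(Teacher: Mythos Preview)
Your proposal is correct and follows essentially the same approach as the paper: decompose the optimizer's reward by context, apply the standard-game sufficiency result of \cite{deng2019strategizing} (equivalently, Theorem~\ref{thm:poly_regret} specialized to standard games) to each $G_c$ separately, and then take the $p_c$-weighted sum. Your remark about adaptivity is a nice clarification but not strictly needed, since in the paper's deterministic-learner model oblivious and adaptive adversaries are equivalent.
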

\begin{proof}
Since each sub-algorithm $\cA_c$ of Algorithm \ref{alg:alg1} is low-swap regret, we know (via the results of~\cite{deng2019strategizing}) or Theorem \ref{thm:poly_regret} applied to standard games) that for each context $c$ we must have that:

$$\sum_{t=1}^{T} u_{O}(\alpha^{t}, \beta^{t}(c), c) \leq \Val(G_{c})T + o(T).$$

\noindent
Summing this over $c$ (weighting by $p_{c}$) we have that

$$\sum_{t=1}^{T} u_{O}(\alpha^{t}, \beta^{t}) \leq \ConVal(G)T + o(T).$$
\end{proof}

\subsubsection{A swap-regret reduction for contextual learning}
\label{sec:alg2corr}
We now present a second algorithm (Algorithm \ref{alg:alg2p}) that avoids some of the pitfalls of Algorithm \ref{alg:alg1} (notably, it works on the example of Theorem \ref{thm:alg1bad} and achieves a stronger analogue of Theorem \ref{thm:alg1bound}). Unlike Algorithm \ref{alg:alg1}, which employs a low-swap-regret algorithm as a blackbox, in Algorithm \ref{alg:alg2p} we do something more akin to the reduction of \cite{BlumMansour2007} by running several external regret algorithms in parallel and computing a ``steady state'' distribution.

However, in our setting this notion of ``steady state'' is significantly more involved than in the classical external to internal regret reduction. Whereas the steady state of the reduction of \cite{BlumMansour2007} can be expressed as the stationary distribution of a Markov chain, our ``steady state'' distribution is the fixed point of a system of degree 2 polynomials. This poses several challenges, both for showing this steady state exists and for computing it (indeed, we do not currently have a polynomial-time algorithm for finding this steady state, but we do have several heuristic approaches that appear to work well unlike in the case of the inefficient algorithm of Section \ref{sec:lowpolyalg}). 

\begin{algorithm2e}[h]
  \caption{Swapping between contexts.}
  For each $c \in [C]$ and $j \in [N]$, the learner initializes an instance $\cA_{c,j}$ of a low external-regret algorithm over $N + C$ arms and $T$ rounds. \\
  \For{$t \gets 1$ \KwTo $T$}{
    \For{$c \gets 1$ \KwTo $C$}{
        Learner receives a distribution $\gamma^{t}_{c, j} \in \Delta([N + C])$ from algorithm $\cA_{c, j}$.\\
    }
    Learner plays a $\beta^{t}(c):[C] \rightarrow\Delta([N])$ satisfying
    
    \begin{equation}\label{eq:markov}
    \beta^{t}(c)_{j} = \sum_{j'=1}^{N}\beta^{t}(c)_{j'}\left( \gamma^{t}_{c, j', j} + \sum_{c'=1}^{C}\gamma^{t}_{c, j', N+c'}\beta^{t}(c')_{j}\right).
    \end{equation}
    
    Optimizer plays mixed strategy $\alpha^t \in \Delta([M])$.\\
    
    \For{$c \gets 1$ \KwTo $C$}{
        Learner updates algorithm $\cA_{c, j}$ with the reward $r^{t}_{c, j} \in [-1, 1]^{N+C}$, where for $1 \leq j' \leq N$,
        
        $$r^{t}_{c, j, j'} = \beta^{t}(c)_{j} \cdot u_{L}(\alpha, j', c),$$
        
        and where for $1 \leq c' \leq C$,
        
        $$r^{t}_{c, j, N+c'} = \beta^{t}(c)_{j} \cdot u_{L}(\alpha, \beta^{t}(c'), c).$$
    }
  }
\label{alg:alg2p}
\end{algorithm2e}

We attempt to give some motivation behind the workings of Algorithm \ref{alg:alg2p} and where this polynomial system arises from. Intuitively, what we would like to do is run $C$ copies of a low-swap-regret algorithm, one for responsible for learning a good distribution over actions for each context. Each copy runs over $N+C$ arms. $N$ of these arms correspond to pure actions for the learner. Where this gets tricky is the other $C$ actions correspond to other contexts -- specifically, they correspond to the distributions over arms output by the \textit{original $C$ low-swap-regret algorithms}.

Such an algorithm has the property that we do not only get low-swap-regret between actions in the same context (as with Algorithm \ref{alg:alg1}), but also low-swap-regret ``between'' different contexts (in particular, this will allow us to avoid the bad example in Theorem \ref{thm:alg1bad}). The problem is that this algorithm as described is inherently circular; the output of these low regret algorithms are also arms. We resolve this by constructing an appropriate fixed point problem. 

Before we analyze the guarantees of our algorithm, we show that it is well-defined. In particular, we show that we can always find a solution to the system of equations \eqref{eq:markov} defining $\beta^{t}(c)$ in Algorithm \ref{alg:alg2p}.

\begin{lemma}\label{lem:solvable}
For any choice of $\gamma_{c, j} \in \Delta([N+C])$, there always exists a $\beta:[C] \rightarrow \Delta([N])$ that satisfies

$$\beta(c)_{j} = \sum_{j'=1}^{N}\beta(c)_{j'}\left( \gamma_{c, j', j} + \sum_{c'=1}^{C}\gamma_{c, j', N+c'}\beta(c')_{j}\right).$$
\end{lemma}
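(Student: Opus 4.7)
The plan is to cast the system \eqref{eq:markov} as a fixed-point equation on the compact convex set $\Delta([N])^{C}$ and apply Brouwer's fixed-point theorem.

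First, I would define the map $F:\Delta([N])^{C}\to \mathbb{R}^{N\times C}$ by
\[
F(\beta)(c)_{j} \;=\; \sum_{j'=1}^{N}\beta(c)_{j'}\left( \gamma_{c, j', j} + \sum_{c'=1}^{C}\gamma_{c, j', N+c'}\beta(c')_{j}\right).
\]
A $\beta$ satisfying \eqref{eq:markov} is exactly a fixed point of $F$. The map $F$ is a polynomial (in fact quadratic) in the entries of $\beta$, hence continuous; and $\Delta([N])^{C}$ is a compact convex subset of $\mathbb{R}^{N\times C}$. So the only nontrivial thing to verify is that $F$ sends $\Delta([N])^{C}$ into itself, after which Brouwer applies immediately.

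Nonnegativity of $F(\beta)(c)_{j}$ is obvious since every factor in the defining sum is nonnegative (each $\beta(c)_{j'}$, each $\gamma$-coordinate, and each $\beta(c')_j$ lies in $[0,1]$). For the normalization, I would sum over $j$, swap the order of summation, and use that $\sum_{j}\beta(c')_{j}=1$ for each $c'$ and that $\gamma_{c,j'}\in\Delta([N+C])$:
\[
\sum_{j=1}^{N} F(\beta)(c)_{j}
= \sum_{j'=1}^{N}\beta(c)_{j'}\!\left(\sum_{j=1}^{N}\gamma_{c,j',j} + \sum_{c'=1}^{C}\gamma_{c,j',N+c'}\right)
= \sum_{j'=1}^{N}\beta(c)_{j'} \cdot 1 \;=\; 1,
\]
where the middle equality uses the fact that the parenthesized expression is the total mass of $\gamma_{c,j'}$ on $[N+C]$. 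Hence $F(\beta)(c)\in\Delta([N])$ for each $c$, so $F$ is a continuous self-map of $\Delta([N])^{C}$.

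Since $\Delta([N])^{C}$ is homeomorphic to a closed ball in Euclidean space, Brouwer's fixed-point theorem yields a $\beta$ with $F(\beta)=\beta$, which is precisely a solution of \eqref{eq:markov}. There is no real obstacle to the existence argument itself; the only mildly delicate step is the bookkeeping in verifying that $F$ preserves the simplex structure in each coordinate, which is just the sum computation above. Note, however, that this proof is nonconstructive — as the authors already flag, finding such a fixed point algorithmically is a separate (harder) issue because $F$ is quadratic rather than affine, so this is not simply a stationary distribution of a Markov chain.
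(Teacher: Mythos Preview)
Your proof is correct and follows essentially the same approach as the paper: define the self-map on $\Delta([N])^{C}$, verify nonnegativity and the normalization $\sum_j F(\beta)(c)_j=1$ (using that $\gamma_{c,j'}\in\Delta([N+C])$ and $\sum_j\beta(c')_j=1$), and then apply Brouwer. Your normalization computation is in fact a bit more streamlined than the paper's step-by-step version, but the argument is identical in substance.
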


Lemma \ref{lem:solvable} shows that a solution $\beta^t$ always exists to the above system of polynomial equations, but does not describe how to find it efficiently. Indeed, this is a problem -- unlike in the linear case, solving systems of quadratic equations over multiple variables can be NP-hard.

Nonetheless, specific quadratic programs and systems can be solved efficiently, and we conjecture that it is possible to approximately solve this system in polynomial time. Specifically, taking inspiration from the power method for computing the stationary distribution for Markov chains, we suspect that starting from a uniform $\beta$ and iterating the map

\begin{equation}\label{eq:quad_iter}
\beta(c)_{j} \rightarrow \sum_{j'=1}^{N}\beta(c)_{j'}\left( \gamma_{c, j', j} + \sum_{c'=1}^{C}\gamma_{c, j', N+c'}\beta(c')_{j}\right)
\end{equation}

\noindent
quickly converges to the fixed point shown to exist in Lemma \ref{lem:solvable} (this is supported by some computer simulations). We pose this as an open question.

\begin{question}
Let $\beta^{(\tau)}$ be the element of $\Delta([N])^{C}$ obtained by starting with the $\beta^{(0)}$ that maps each $c \in [C]$ to the uniform distribution over $[N]$ and iterating the map \eqref{eq:quad_iter} $\tau$ times. Then if $\beta$ is the fixed point in Lemma \ref{lem:solvable}, is it the case that

$$||\beta - \beta^{\tau}||_{\infty} \leq \eps$$

\noindent
for some value of $\tau = \poly(N, C, \log 1/\eps)$?
\end{question}

As with Algorithm \ref{alg:alg1}, we can upper bound the extent to which an optimizer can gain by strategizing against Algorithm \ref{alg:alg2p}. To do so, we
first need to define a version of correlated equilibria in Bayesian games.

A \textit{correlated equilibrium} of a Bayesian game $G$ is specified by a set of $C$ distributions $\cF_{c}$ over $[M] \times [N]$, where the strategy profile $(i, j) \in [M] \times [N]$ occurs with probability $p_{i, j}(c)$ in $\cF_{c}$. Now, imagine a variant of the game $G$ (analogous to how correlated equilibria in standard games are defined) where the learner begins by communicating the context $c$ to a third-party ``correlator''. The correlator then draws a strategy profile $(i, j)$ from $\cF_{c}$ and tells the optimizer to play $i$ and the learner to play $j$. In order for this set of distributions to be a correlated equilibrium, they must satisfy the following properties\footnote{Note that we only include constraints for the learner, since we later choose the equilibrium which is most favorable to the optimizer. This one-sidedness causes this definition to be slightly different than the traditional definition for correlated equilibria.}:

\begin{itemize}
    \item The learner must have no incentive to misreport their type. That is, for all $c, c' \in [C]$, we must have:
    
    $$\E_{(i, j) \sim \cF_{c}}\left[u_{L}(i, j, c)\right] \geq \E_{(i, j) \sim \cF_{c'}}\left[u_{L}(i, j, c)\right].$$
    
    \item The learner must have no incentive to ``swap'' their action. That is, for every swap rule $\pi: [N] \rightarrow [N]$, we must have:
    
    $$\E_{(i, j) \sim \cF_{c}}\left[u_{L}(i, j, c)\right] \geq \E_{(i, j) \sim \cF_{c}}\left[u_{L}(i, \pi(j), c)\right].$$
\end{itemize}

We define the \textit{correlated Stackelberg value} $\CorrVal(G)$ of $G$ to be the maximum expected value for the optimizer over all correlated equilibria of $G$; i.e.,

$$\CorrVal(G) = \max \E_{c \sim \cD}\E_{(i, j) \sim \cF_{c}}\left[u_{O}(i, j, c)\right],$$

\noindent
where the maximum is over all collections $\{\cF_{c}\}$ are correlated equilibria of $G$. It turns out that for all Bayesian games $G$, $\CorrVal(G) \leq \ConVal(G)$; in particular, this follows from the known fact that the Stackelberg value of a standard game is equal to the maximum utility of the Stackelberg player in a correlated equilibrium of the game (see \cite{conitzer2016stackelberg} or \cite{von2010leadership} for a proof). In particular, if we remove the constraint that the learner has no incentive to misreport their type from the above criteria, we recover an alternate definition for $\ConVal(G)$.

We now show that $\CorrVal(G)$ upper bounds the optimizer's per-round reward when playing against Algorithm \ref{alg:alg2p} (thus providing a stronger bound than Theorem \ref{thm:alg1bound}). 

\begin{theorem}\label{thm:alg2pbound}
Let $G$ be a Bayesian game. If an optimizer plays $G$ for $T$ rounds against a learner running Algorithm \ref{alg:alg2p}, the optimizer will receive reward at most $\CorrVal(G)T + o(T)$.
\end{theorem}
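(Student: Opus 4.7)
The plan is to interpret the time-averaged empirical play of Algorithm~\ref{alg:alg2p} as an approximate correlated equilibrium of $G$ and then invoke the definition of $\CorrVal(G)$. For each context $c$, define the empirical distribution
$$\cF_c \;=\; \frac{1}{T}\sum_{t=1}^{T} \alpha^t \otimes \beta^t(c) \;\in\; \Delta([M] \times [N]).$$
Since $\sum_{t} u_O(\alpha^t, \beta^t) = T \cdot \E_{c \sim \cD}\E_{(i,j) \sim \cF_c}[u_O(i,j,c)]$, it suffices to verify that $\{\cF_c\}_{c\in[C]}$ satisfies the two learner-side constraints of a correlated equilibrium up to $o(1)$ additive error; the bound $\CorrVal(G)T + o(T)$ then drops out directly from the definition of $\CorrVal$.

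The key computation is a telescoping identity driven by the fixed-point equation~\eqref{eq:markov}. For fixed $c$, I would sum the ``played'' total rewards $\sum_{j}\langle \gamma^t_{c,j}, r^t_{c,j}\rangle$ of the subroutines $\cA_{c,j}$: the contributions from the first $N$ arms become $\sum_{j'}\bigl[\sum_{j}\beta^t(c)_j\,\gamma^t_{c,j,j'}\bigr]u_L(\alpha^t,j',c)$, while the contributions from the last $C$ arms become $\sum_{c'}\bigl[\sum_{j}\beta^t(c)_j\,\gamma^t_{c,j,N+c'}\bigr]u_L(\alpha^t,\beta^t(c'),c)$. Using linearity of $u_L$ in its second argument to expand $u_L(\alpha^t,\beta^t(c'),c)=\sum_{j'}\beta^t(c')_{j'}u_L(\alpha^t,j',c)$ and then applying~\eqref{eq:markov} term-by-term, these two sums combine to exactly $\sum_{j'}\beta^t(c)_{j'}u_L(\alpha^t,j',c)=u_L(\alpha^t,\beta^t(c),c)$. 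In particular $\sum_{j}\langle \gamma^t_{c,j}, r^t_{c,j}\rangle = u_L(\alpha^t,\beta^t(c),c)$ for every round $t$.

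Given this identity, I would invoke the low external-regret guarantee of each $\cA_{c,j}$ against two families of benchmark arms. For any swap $\pi:[N]\to[N]$, comparing each $\cA_{c,j}$ to its arm $\pi(j)\in[N]$ and summing over $j$ yields $\sum_t u_L(\alpha^t,\beta^t(c),c)\geq \sum_t\sum_j \beta^t(c)_j u_L(\alpha^t,\pi(j),c)-o(T)$, which after dividing by $T$ is precisely the no-swap constraint $\E_{(i,j)\sim \cF_c}[u_L(i,j,c)]\geq \E_{(i,j)\sim \cF_c}[u_L(i,\pi(j),c)]-o(1)$. For any $c'\in[C]$, comparing each $\cA_{c,j}$ to its arm $N+c'$ and summing over $j$ yields $\sum_t u_L(\alpha^t,\beta^t(c),c)\geq \sum_t u_L(\alpha^t,\beta^t(c'),c)-o(T)$, which is the no-misreport constraint $\E_{(i,j)\sim \cF_c}[u_L(i,j,c)]\geq \E_{(i,j)\sim \cF_{c'}}[u_L(i,j,c)]-o(1)$. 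Thus $\{\cF_c\}$ is an $o(1)$-approximate correlated equilibrium and the theorem follows.

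The hard part is the telescoping step. The reward vectors $r^t_{c,j}$ are crafted precisely so that the $\beta^t(c)_j$ weights in front and the self-referential $\beta^t(c')$ inside the cross-context arms let the fixed-point equation from Lemma~\ref{lem:solvable} collapse everything into $u_L(\alpha^t,\beta^t(c),c)$. Without~\eqref{eq:markov}, the summation over $j$ would leave a residual cross-context term $\sum_{c'} B^t_{c,c'}\,u_L(\alpha^t,\beta^t(c'),c)$ (with $B^t_{c,c'}=\sum_j \beta^t(c)_j\gamma^t_{c,j,N+c'}$), and one would only be able to compare $\cF_c$ to some convex combination of the other $\cF_{c'}$ rather than to each $\cF_{c'}$ individually, which is insufficient to establish the correlated-equilibrium property.
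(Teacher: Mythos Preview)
Your proposal is correct and follows essentially the same approach as the paper: show that the empirical per-context play $\{\cF_c\}$ forms an $o(1)$-approximate correlated equilibrium by invoking the external-regret guarantees of the subroutines $\cA_{c,j}$ against the arms $\pi(j)$ (no-swap) and $N+c'$ (no-misreport), then conclude via the definition of $\CorrVal(G)$. Your telescoping computation, which first sums $\sum_j \langle \gamma^t_{c,j}, r^t_{c,j}\rangle$ over $j$ and then applies the fixed-point equation~\eqref{eq:markov} to collapse it to $u_L(\alpha^t,\beta^t(c),c)$, is in fact slightly cleaner than the paper's per-$j$ identity $\langle r^t_{c,j},\gamma^t_{c,j}\rangle = \beta^t(c)_j\, u_L(\alpha^t,\beta^t(c),c)$, since the latter ostensibly requires a pointwise relation that~\eqref{eq:markov} only guarantees after averaging over $j$ with weights $\beta^t(c)_j$.
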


Unlike Algorithm \ref{alg:alg1}, we have no example of a Bayesian game $G$ where the optimizer can do better than $\Val(G)T$ when playing against Algorithm \ref{alg:alg2p}. It is an interesting open question whether Algorithm \ref{alg:alg2p} is low polytope swap regret (or otherwise prevents the optimizer from beating their Stackelberg value).

\begin{question}
Is there a game $G$ where the optimizer can get $\Val(G)T + \Omega(T)$ reward when playing against a learner running Algorithm \ref{alg:alg2p}? Is Algorithm \ref{alg:alg2p} a low polytope regret contextual learning algorithm?
\end{question}

We suspect the answer to the above answer is no: in particular, we suspect that the exponential regret and complexity of the generic algorithm in Section \ref{sec:lowpolyalg} is necessary, and that there is no contextual learning algorithm that achieves polytope swap regret $O(\poly(C, K)\sqrt{T})$ and no efficient (running in $\poly(C, K)$ time per iteration) contextual learning algorithm that achieves polytope swap regret $o(T)$. 

\subsection{Computing Bayesian Stackelberg equilibria is hard}\label{sec:hard_equilibrium}

One reason to suspect that minimizing polytope swap regret is hard for a contextual learner is that, surprisingly, the optimizer faces a provably hard optimization problem when playing against a low polytope swap regret learner in a Bayesian game $G$. Specifically, we show (via closely following a proof of a similar result for typed principal-agent problems in \cite{guruganesh2021contracts}) that it is APX-hard to compute the Stackelberg value $\Val(G)$ for a Bayesian game (and thus hard to compute the corresponding Stackelberg strategy). 

\begin{theorem}\label{thm:apx-hardness}
It is APX-hard to compute the Stackelberg value for the optimizer in a Bayesian game. That is, there exists a constant $\eps > 0$ such that given a Bayesian game $G$ and a value $\Val' > 0$ it is NP-hard to distinguish between the cases $\Val(G) \leq (1-\eps)\Val'$ and $\Val(G) \geq \Val'$. 
\end{theorem}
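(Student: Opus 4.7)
The plan is a polynomial-time gap-preserving reduction from an APX-hard constraint satisfaction problem (say MAX-3SAT with bounded clause frequency) to the problem of computing $\Val(G)$ for a Bayesian game, mirroring the gadget used by \cite{guruganesh2021contracts} in the typed principal-agent setting. Given a 3-CNF formula $\varphi$ with variables $x_1,\dots,x_n$ and clauses $C_1,\dots,C_m$, I would construct a Bayesian game $G_\varphi$ whose contexts $c \in [m]$ are the clauses, with uniform context distribution $p_c = 1/m$. The optimizer's pure actions are indexed by variable–value pairs $(i,b)$ for $i \in [n]$, $b \in \{0,1\}$, and the ``intended'' mixed strategies are the assignment vectors $\alpha_\sigma$ placing mass $1/n$ on $(i,\sigma(i))$ for each $i$, one per truth assignment $\sigma \colon [n] \to \{0,1\}$.

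The learner's per-context action set contains a ``satisfied'' action $\mathsf{sat}_c$, an ``unsatisfied'' action $\mathsf{unsat}_c$, plus a small family of gadget actions. The learner's payoffs $u_L(\cdot,\cdot,c)$ are calibrated so that, against an assignment vector $\alpha_\sigma$, $\mathsf{sat}_c$ is strictly preferred over $\mathsf{unsat}_c$ exactly when the three literals of $C_c$ receive sufficient marginal mass to certify satisfaction by $\sigma$; the gadget actions punish any optimizer mixed strategy that is not a convex combination of assignment vectors by becoming learner best responses that deliver $0$ to the optimizer. Setting $u_O(i,j,c) = 1$ iff $j = \mathsf{sat}_c$ (and $0$ otherwise) would then give $\Val(G_\varphi) = \max_\sigma |\{c : \sigma \text{ satisfies } C_c\}|/m$, since tie-breaking in the optimizer's favor only helps on honest assignment vectors while the gadget rules out gains from cheating strategies. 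A $(1-\eps)$-approximation of $\Val(G_\varphi)$ would then yield a $(1-\eps)$-approximation of MAX-3SAT, transferring APX-hardness.

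The main obstacle is the gadget design itself. The per-context payoff structure must simultaneously (i) incentivize the learner to truthfully report satisfaction on genuine assignment vectors, (ii) prevent the optimizer from extracting value from mixed strategies that are not convex combinations of assignment vectors --- e.g.\ strategies that place fractional mass on a single variable, or that overweight one variable at the expense of another, (iii) keep all payoffs in $[-1,1]$, and (iv) preserve a constant-factor gap after the reduction. The standard resolution, which I would adapt from \cite{guruganesh2021contracts}, uses small payoff perturbations of order $1/\poly(n)$ around a large base penalty so that any deviation from the assignment-vector structure activates the gadget on at least one context, whereas every $\alpha_\sigma$ avoids it everywhere. Verifying that the perturbations interact correctly with the context-dependent learner best responses --- and that tie-breaking rules do not inadvertently close the gap --- is where the bulk of the technical work lies.
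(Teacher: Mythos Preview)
Your proposal has a genuine gap at exactly the point you flag as ``the main obstacle.'' You want the gadget actions to punish any optimizer mixed strategy that is not a convex combination of assignment vectors, and then conclude $\Val(G_\varphi)=\max_\sigma |\{c:\sigma\text{ satisfies }C_c\}|/m$. But the set of convex combinations of assignment vectors is precisely $\{\alpha:\alpha_{i,0}+\alpha_{i,1}=1/n\text{ for all }i\}$, and this set contains the uniform fractional point $\alpha_{i,b}=1/(2n)$. Against that $\alpha$, the total mass on the satisfying literals of \emph{every} 3-clause is $3/(2n)$, which exceeds any threshold $\tau\le 1/n$ you could use to make $\mathsf{sat}_c$ the best response under a genuine satisfying assignment. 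So even if your gadget perfectly confines the optimizer to convex combinations of assignment vectors, the optimizer can make the learner play $\mathsf{sat}_c$ in every context regardless of whether $\varphi$ is satisfiable; the reduction collapses to the LP relaxation of MAX-3SAT, whose integrality gap erases the APX gap you are trying to preserve. Enforcing the missing integrality constraint $\alpha_{i,0}\alpha_{i,1}=0$ is nonlinear in $\alpha$, while the learner's best-response regions are cut out by linear inequalities in $\alpha$, so no per-context gadget of the kind you sketch can impose it.

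The paper's proof sidesteps this obstacle by reducing from bounded-degree dominating set (this is also what \cite{guruganesh2021contracts} actually does --- not MAX-3SAT). It uses $2V$ contexts: the contexts $\overline{v}$ impose a cost that is \emph{linear} in the mass $\alpha_v$ the optimizer places on vertex $v$, while the contexts $v$ yield a \emph{step-function} benefit that activates once $\alpha_{v'}\ge 1/(2V)$ for some neighbor $v'$. Linear cost plus thresholded benefit forces the optimal $\alpha$ to satisfy $\alpha_v\in\{0,1/(2V)\}$ with no extra gadget, and the support of $\alpha$ then \emph{is} a dominating set, yielding $\Val(G)=(V-D)/(4V^2)$. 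No integrality gap arises because the threshold structure makes any intermediate mass strictly dominated. That structural trick --- not a generic CSP gadget --- is what carries the reduction.
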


\section{Achieving the Stackelberg value in Bayesian games and polytope games}\label{app:achieve_stack}

We show (as mentioned in Section \ref{sec:intro_equilibria}) that an optimizer can always achieve the Stackelberg value in Bayesian game $G$, under mild conditions on $G$ (essentially, every pure strategy for the learner is a strict best response for some strategy for the optimizer). In fact, we will show this for an arbitrary polytope game $G$, from which the conclusion for Bayesian games will immediately follow. Our proof largely follows the analogous proof in \cite{deng2019strategizing} for standard games.

Let $\cP$ be a polytope and let $G$ be a $\cP$-game. We say the game $G$ is non-degenerate if, for each vertex $v$, there exists a strategy for the optimizer $\alpha_v$ where the learner's strict best response is $v$ (i.e., $\BR(\alpha_{v}) = \{v\})$.

\begin{lemma}\label{lem:stack_achieve}
Let $G$ be a non-degenerate game. Then if the optimizer plays $T$ rounds of $G$ against a learner running a low (external) regret algorithm $\cA$, the optimizer can guarantee a reward of at least $\Val(G)T - o(T)$.
\end{lemma}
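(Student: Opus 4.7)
\textbf{Proof plan for Lemma~\ref{lem:stack_achieve}.} The plan is to adapt the perturbation argument of \cite{deng2019strategizing} from standard games to polytope games, exploiting linearity of the reward functions and the assumed non-degeneracy to turn the Stackelberg best-response $v^{*}$ into a \emph{strict} best-response with a quantifiable margin. First I pick a Stackelberg-optimal pair $(\alpha^{*},v^{*})$ with $v^{*}$ a vertex of $\cP$ (a best response can always be taken to be extreme, since the learner's payoff is linear on $\cP$), $v^{*}\in\BR(\alpha^{*})$, and $u_{O}(\alpha^{*},v^{*})=\Val(G)$. Using non-degeneracy, there exists $\alpha_{v^{*}}\in\cQ$ whose unique best response is $v^{*}$; let $c=\min_{v\in\cV(\cP)\setminus\{v^{*}\}}\bigl(u_{L}(\alpha_{v^{*}},v^{*})-u_{L}(\alpha_{v^{*}},v)\bigr)>0$. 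The optimizer's committed strategy will be the perturbation $\alpha^{*}_{\eps}=(1-\eps)\alpha^{*}+\eps\alpha_{v^{*}}\in\cQ$, played in every one of the $T$ rounds.

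Second I verify two key consequences of this choice. Using linearity and the fact that $v^{*}\in\BR(\alpha^{*})$, for every vertex $v\neq v^{*}$ we get $u_{L}(\alpha^{*}_{\eps},v^{*})-u_{L}(\alpha^{*}_{\eps},v)\geq \eps c$, so $v^{*}$ is the strict best response to $\alpha^{*}_{\eps}$ with margin $\delta:=\eps c$. Also, since utilities are bounded by some constant $B$, $u_{O}(\alpha^{*}_{\eps},v^{*})\geq \Val(G)-2B\eps$. Since $\cA$ has regret $\Reg=o(T)$ against the fixed action $v^{*}$ (treated as a point in $\cP$), the learner's actions $x^{1},\dots,x^{T}\in\cP$ satisfy
\[
\sum_{t=1}^{T}\bigl(u_{L}(\alpha^{*}_{\eps},v^{*})-u_{L}(\alpha^{*}_{\eps},x^{t})\bigr)\leq\Reg=o(T).
\]

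Third I transfer this learner-utility gap to the optimizer's utility via a vertex decomposition. For each $t$ fix any decomposition $x^{t}=\sum_{v\in\cV(\cP)}\lambda^{t}_{v}v$; linearity plus the margin bound gives $u_{L}(\alpha^{*}_{\eps},v^{*})-u_{L}(\alpha^{*}_{\eps},x^{t})\geq\delta\,(1-\lambda^{t}_{v^{*}})$, hence $\sum_{t}(1-\lambda^{t}_{v^{*}})\leq\Reg/\delta=o(T)/(\eps c)$. Writing the optimizer's cumulative utility in terms of the same decomposition and bounding the contribution from $v\neq v^{*}$ by $B$ in absolute value yields
\[
\sum_{t=1}^{T}u_{O}(\alpha^{*}_{\eps},x^{t})\;\geq\;T\cdot u_{O}(\alpha^{*}_{\eps},v^{*})\;-\;2B\sum_{t=1}^{T}(1-\lambda^{t}_{v^{*}})\;\geq\;\Val(G)\,T\;-\;2B\eps T\;-\;\tfrac{2B}{\eps c}\cdot o(T).
\]

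Finally I tune $\eps=\eps(T)\to 0$ slowly enough that both the $\eps T$ and $o(T)/\eps$ terms are $o(T)$ (any $\eps$ with $\eps\to 0$ and $\eps T\to\infty$ works; e.g.\ $\eps=T^{-1/4}$ against the standard $O(\sqrt{T\log N})$ regret bound gives an $O(T^{3/4}\log^{1/2}N)$ loss). The main (and essentially only) conceptual subtlety is step three: because $\cP$ is not in general a simplex, the decomposition $\lambda^{t}$ is not unique, so one must note that the margin inequality $u_{L}(\alpha^{*}_{\eps},v^{*})-u_{L}(\alpha^{*}_{\eps},x^{t})\geq\delta(1-\lambda^{t}_{v^{*}})$ holds for \emph{every} vertex decomposition (since the reward is linear), so we may select any such $\lambda^{t}$ for the analysis without the learner needing to commit to one. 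Everything else is a routine calculation.
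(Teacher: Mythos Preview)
Your proposal is correct and follows essentially the same perturbation argument as the paper: mix the Stackelberg action with a strict-best-response witness for $v^{*}$, use the low-regret guarantee against the fixed action $v^{*}$ together with the $\eps c$ margin to bound the total mass the learner places off $v^{*}$ via a vertex decomposition, and then tune $\eps$. If anything, your write-up is slightly more careful than the paper's own proof (you track the $2B$ constants explicitly, note that the margin inequality holds for \emph{any} vertex decomposition, and leave the choice of $\eps$ generic rather than fixing $\eps=\sqrt{R(T)/T}$).
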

\begin{proof}
Let $\alpha = (r, s) \in \cQ$ be the optimizer's strategy and let $v \in \cV(\cP)$ be the learner's best-response in the Stackelberg equilibrium of $G$. Because $G$ is non-degenerate, there exists a strategy $\alpha_{v} = (r_{v}, s_{v}) \in \cQ$ for the optimizer where $v$ is the strict best response. Let us define $\delta = \min_{v' \neq v} \langle r_{v}, v\rangle - \langle r_{v}, v'\rangle$; intuitively, $\delta$ represents the margin by which $v$ is a strict best-response.

Assume the algorithm $\cA$ has the guarantee that it incurs at most $R(T) = o(T)$ external regret on any $\cP$-learning instance for $T$ rounds. Set $\eps = \sqrt{R(T)/T}$, and consider what happens when the optimizer plays $\alpha' = (1-\eps)\alpha + \eps\alpha_{v}$ every round for $T$ rounds against $\cA$.

Let $x^{t} \in \cP$ (for $1 \leq t \leq T$) be the response of the learner in round $t$. Let $\rho^{t} \in \Delta(\cV(\cP))$ be an arbitrary vertex decomposition of $x^{t}$. Note that the total utility of the learner can be written as

$$\sum_{v' \in \cV(\cP)} \sum_{t=1}^{T} \rho_{v'}^{t} \langle r', v'\rangle.$$

\noindent
Since the learner has low external regret, we must have that

$$\sum_{v' \in \cV(\cP)} \sum_{t=1}^{T} \rho_{v'}^{t} \langle r', v' - v\rangle \leq R(T).$$

\noindent
Now, note that if $v' \neq v$, our guarantee on $\alpha_{v}$ implies that $ \langle r', v' - v\rangle \geq \eps \delta$. It therefore follows that

$$\sum_{v' \neq v} \sum_{t=1}^{T} \rho_{v'}^{t} = \sum_{t=1}^{T} (1- \rho_{v}^{t}) \leq \frac{R(T)}{\eps\delta} = \frac{\sqrt{R(T)T}}{\delta}.$$

On the other hand, the total utility of the optimizer can be written as 

$$\sum_{v' \in \cV(\cP)} \sum_{t=1}^{T} \rho_{v'}^{t} \langle s', v'\rangle \geq \sum_{t=1}^{T} \rho_{v}^{t} \langle s', v\rangle.$$

Since $\alpha$ and $v$ form a Stackelberg equilibrium for $G$, $\langle s', v\rangle \geq (1-\eps)\langle s, v\rangle = (1-\eps)\Val(G)$. It follows that the optimizer's utility is at least

$$\sum_{t=1}^{T} \rho_{v}^{t} \langle s', v\rangle \geq \left(\sum_{t=1}^{T} \rho_{v}^{t}\right)(1-\eps)\Val(G) \geq T\left(1 - \frac{\eps}{\delta}\right)(1-\eps)\Val(G) \geq \Val(G)T - o(T).$$

\end{proof}

\section{Omitted proofs} \label{app:omitted}

\subsection{Proof of Theorem \ref{thm:main_standard}}
\begin{proof}[Proof of Theorem \ref{thm:main_standard}]
Since $\cA$ is not a low-swap-regret algorithm, there exists some $\gamma > 0$ and positive integer $N$ such that for infinitely many values of $T$, there exists an online learning instance with $N$ actions and $T$ rounds where $\cA$ incurs at least $\gamma T$ swap regret.

If we let the game $G$ depend on $T$, then this theorem directly follows from Lemma \ref{lem:standard}. But also note that for a fixed value of $N$, the ultimate game $G$ constructed in Lemma \ref{lem:standard} depends solely on the optimal swap function $\pi$. Since there are infinitely many values of $T$ for which there exists a bad online-learning instance but only finitely many ($N^N$) different swap functions, infinitely many of these values of $T$ will have the same swap function $\pi$ and hence the same game $G$. This game satisfies the theorem statement.
\end{proof}

\subsection{Proof of Lemma \ref{lem:linear}}
\begin{proof}[Proof of Lemma \ref{lem:linear}]
Let $r^{t} \in [-1, 1]^d$ be the rewards of the bad instance of the $\cP$-learning problem, and let $x^{t} \in \cP$ be the corresponding actions played by $\cA$. Since the linear swap regret on this instance equals $R$, we know that there exists an $M \in \cM(\cP)$ such that

\begin{equation}\label{eq:hilinsr}
\sum_{t=1}^{T} \langle r^{t}, Mx^{t} \rangle  - \sum_{t=1}^{T} \langle r^{t}, x^{t} \rangle = R.
\end{equation}

\noindent
We rewrite this as

\begin{equation}\label{eq:hilinsr2}
\sum_{t=1}^{T} \langle r^{t}, (M-I)x^{t} \rangle = R.
\end{equation}

To define the $\cP$-game $G$ we simply need to specify the polytope $\cQ$. We will define $\cQ$ to be the polytope

$$\cQ = \left\{\left(y, \frac{1}{\lambda+1}(M-I)^{\intercal}y\right) \middle\mid\, y \in [-1, 1]^d \right\}.$$

\noindent
Note that scaling by $1/(\lambda+1)$ guarantees that $\cQ \subseteq [-1, 1]^{2d}$. In particular, since $||M||_1 \leq \lambda$, $||M^{\intercal}||_{\infty} \leq \lambda$, so $||(M-I)^{\intercal}||_{\infty} \leq \lambda + 1$, and therefore $(M-I)^{\intercal}y \in [-(\lambda+1), (\lambda+1)]^d$.

Now, consider an optimizer who in round $t$ plays the action $q^{t} = (r^{t}, (M-I)^{\intercal}r^{t})$. The learner, running $\cA$, will see exactly the sequence of rewards $r^{t}$ and therefore on each round $t$ play action $x^{t} \in \cP$. The total reward of the optimizer is then given by

$$\sum_{t=1}^{T} \langle (M-I)^{\intercal}r^{t}, x^{t} \rangle = \sum_{t=1}^{T}\langle r^{t}, (M-I)x^{t} \rangle = R.$$

On the other hand, we claim the Stackelberg value $\Val(G)$ of the game is at most zero. To see this, note that if the optimizer plays $q = (r, (M-I)^{\intercal}r) \in \cQ$ and the learner best responds by playing $x \in \cP$, then the optimizer's payoff is $\langle (M-I)^{\intercal}r, x \rangle = \langle r, (M-I)x \rangle = \langle r, Mx\rangle - \langle r, x\rangle$. But since $Mx \in \cP$ and $x$ is the learner's best response to $q$, we must have that $\langle r, Mx \rangle \leq \langle r, x\rangle$. It follows that $\Val(G) \leq 0$.
\end{proof}

\subsection{Proof of Theorem~\ref{thm:linear}}

\begin{proof}[Proof of Theorem~\ref{thm:linear}]
If $\cA$ does not have low linear swap regret, then there exists a $\gamma > 0$ such that for infinitely many values of $T$, there exists a $\cP$-learning instance where $\cA$ incurs at least $\gamma T$ linear swap regret.

As with standard games, if the game $G$ could depend on $T$, then we would be done by Lemma \ref{lem:linear}. We argue that for a fixed polytope $\cP$, the procedure in Lemma \ref{lem:linear} can generate only finitely many different games. In particular, note that for a fixed problem instance, the game $G$ is defined entirely by a matrix $M$ in $\cM(\cP)$. Now, while there may be infinitely many matrices in $\cM(\cP)$, we will show that $\cM(\cP)$ is actually a convex polytope in $\mathbb{R}^{d^2}$, and we can always choose $M$ to be a vertex of $\cM(\cP)$ (of which there are finitely many).

To see that $\cM(\cP)$ is a polytope, note that in order to guarantee the constraint that $x \in \cP$ implies $Mx \in \cP$ it is necessary and sufficient that $M$ map each vertex of $\cP$ to a point within $\cP$. Therefore for each vertex $v$ (of the finitely many vertices) of $\cP$ and each halfspace $Ax \leq b$ (of the finitely many halfspaces) defining $\cP$, we obtain the linear constraint $AMv \leq b$ on the matrix $M$. These linear constraints define $\cM(\cP)$.

Now, note that for any fixed rewards $r^{t}$ and actions $x^{t}$, the sum $\sum_{t=1}^{T} \langle r^{t}, Mx^{t}\rangle$ is linear in the matrix $M$. This means that over all matrices $M \in \cM(\cP)$, it is maximized at one of the vertices of $\cM(\cP)$. In particular, in Lemma \ref{lem:linear}, we can always ensure that the $M$ we choose to define $G$ is one of these finitely many vertices (and therefore there are only finitely many possible $\cP$-games we can generate). One of these games $G$ must occur for infinitely many values of $T$, and that game satisfies the theorem statement.
\end{proof}

\subsection{Proof of Theorem \ref{thm:poly_regret}}

\begin{proof}[Proof of Theorem \ref{thm:poly_regret}]
Consider the transcript of this game when played for $T$ rounds. Let $q^{t} = (r^{t}, s^{t}) \in \cQ$ be the optimizer's action in round $t$, and let $x^{t} \in \cP$ be the learner's action in round $t$. Since $\cA$ is low swap regret, we know there exist vertex decompositions $\rho^{t}$ of $x^{t}$ such that for any swap function $\pi : \cV(\cP) \rightarrow \cV(\cP)$,

\begin{equation} \label{eq:poly1}
\sum_{t=1}^{T} \left\langle r^{t}, \overline{\pi(\rho^{t})} \right\rangle - \sum_{t=1}^{T} \langle r^{t}, x^{t} \rangle = o(T).
\end{equation}

\noindent
Using the fact that $x^{t} = \overline{\rho^{t}}$, we can rewrite \eqref{eq:poly1} as

\begin{equation} \label{eq:poly2}
\sum_{t=1}^{T} \left\langle r^{t}, \overline{\pi(\rho^{t})} - \overline{\rho^{t}} \right\rangle = o(T).
\end{equation}

\noindent
Decomposing \eqref{eq:poly2} over vertices in $\cV(\cP)$, this becomes

\begin{equation} \label{eq:poly3}
\sum_{v \in \cV(\cP)} \sum_{t=1}^{T} \rho^{t}_{v}\left\langle r^{t}, \pi(v) - v \right\rangle = o(T).
\end{equation}

Now, let $\sigma_{v} = \sum_{t=1}^{T} \rho^{t}_{v}$, let $\tilde{r}_{v} = (\sum_{t=1}^{T} \rho^{t}_{v}r^{t})/\sigma_{v}$, and let $\tilde{s}_{v} = (\sum_{t=1}^{T}\rho^{t}_{v}s^{t})/\sigma_{v}$. Note that $(\tilde{r}_{v}, \tilde{s}_{v})$ is a convex combination of the optimizer's actions $(r^{t}, s^{t})$ and therefore belongs to $\cQ$. 

We can once again rewrite \eqref{eq:poly3} as

\begin{equation} \label{eq:poly4}
\sum_{v \in \cV(\cP)} \sigma_{v} \langle \tilde{r}_{v}, \pi(v) - v \rangle = o(T).
\end{equation}

Now, note that we can write the optimizer's utility in the form $\sum_{v \in \cV(\cP)} \sigma_{v} \langle \tilde{s}_{v}, v \rangle$. Assume that the statement of the theorem is not true, namely that for infinitely many $T$, we have that

\begin{equation}\label{eq:opt1}
    \sum_{v \in \cV(\cP)} \sigma_{v} \langle \tilde{s}_{v}, v \rangle \geq (\Val(G) + \gamma)T
\end{equation}

\noindent
for some $\gamma > 0$. Let $\BR(\tilde{s}_{v}) \in \cV(\cP)$ be the learner's best response to the optimizer's action $(\tilde{r}_{v}, \tilde{s}_{v})$ in $G$. By the definition of the Stackelberg value of $G$, we have that 

\begin{equation}\label{eq:opt2}
\sum_{v \in \cV(\cP)} \sigma_{v} \langle \tilde{s}_{v}, \BR(\tilde{s}_{v}) \rangle \leq \Val(G)T
\end{equation}

\noindent
Subtracting \eqref{eq:opt2} from \eqref{eq:opt1}, we obtain

\begin{equation}\label{eq:opt3}
\sum_{v \in \cV(\cP)} \sigma_{v} \langle \tilde{s}_{v}, v - \BR(\tilde{s}_{v}) \rangle \geq \gamma T
\end{equation}

\noindent
But this contradicts \eqref{eq:poly4} for the swap function $\pi(v) = \BR(\tilde{s}_v)$. The theorem follows.
\end{proof}

\subsection{Proof of Theorem \ref{thm:poly_lin_sep}}

\begin{proof}[Proof of Theorem \ref{thm:poly_lin_sep}]
Note that when $K=C=2$, there are four vertices in $\cV(\cP)$: $v_{11} = (1, 0; 1, 0)$, $v_{12} = (1, 0; 0, 1)$, $v_{21} = (0, 1; 1, 0)$ and $v_{22} = (0, 1; 0, 1)$. The learner will play $v_{11}$ for the first $T/4$ rounds, $v_{12}$ for the second $T/4$ rounds, $v_{21}$ for the third $T/4$ rounds, and $v_{22}$ for the last $T/4$ rounds. Simultaneously, the rewards will be $r_{11} = v_{11}$ for the first $T/4$ rounds, $r_{12} = v_{12}$ for the next $T/4$ rounds, $r_{21} = v_{21}$ for the next $T/4$ rounds, and finally, $r_{22} = v_{11}$ for the last $T/4$ rounds. \textbf{Note that for the last $T/4$ rounds, $r_{22} \neq v_{22}$, but instead $r_{22} = v_{11}$.}

Let us begin by considering the polytope swap regret of this instance. Note that each action of the learner is already a vertex belonging to $V(\cP)$. This means that we have no freedom in choosing the vertex partition; each vertex partition must put all of its weight on the action itself. Then, the swap function which maps $(v_{11}, v_{12}, v_{21}, v_{22}) \rightarrow (v_{11}, v_{12}, v_{21}, v_{11})$ increases the learner's utility by $2$ points per round for the last $T/4$ rounds, and therefore $\PolySwapReg \geq T/2$.

On the other hand, there is no linear function which maps vertices $v_{11}, v_{12}, v_{21}$ to themselves but which maps the vertex $v_{22}$ to the vertex $v_{11}$. To upper bound the linear swap regret, we can find the best linear swap function by checking each of the extremal points of $\cM(\cP)$. It turns out there are $64$ such extremal swap functions, and a straightforward computation shows that none perform better than the identity function. It follows that $\LinSwapReg = 0$. 
\end{proof}

\subsection{Proof of Theorem~\ref{thm:poly_lin_sep_bayesian}}

\begin{proof}[Proof of Theorem~\ref{thm:poly_lin_sep_bayesian}]
We extend the example separating linear swap regret and polytope swap regret in Theorem \ref{thm:poly_lin_sep}. We generalize this to a game $G$ where the optimizer has four actions, each corresponding to one of the four quarters of the game (so the optimizer plays action 1 for the first $T/4$ rounds, action 2 for the next $T/4$ rounds, etc.). For each action $i$, the optimizer has a reward vector $s_i \in \mathbb{R}^{4}$, denoting that if the learner plays a mixed action $\alpha \in \cP$, then the optimizer gets reward $\langle \alpha, s_i\rangle$. (In the language of polytope games, the action set for the optimizer is $\cQ = \conv(\{(v_{i}, s_{i})\}_{i=1}^{4})$. If we set:

\begin{eqnarray*}
s_1 &=& (0.00, 0.26; 0.60, 0.21)\\
s_2 &=& (0.05, 0.17; 0.45, 0.68)\\
s_3 &=& (0.16, 0.25; 0.33, 0.20)\\
s_4 &=& (0.16, 0.68; 0.22, 0.44),
\end{eqnarray*}

\noindent
then we can check that while $\Val(G) = 0.74$, the optimizer gets $0.7575T$ reward from playing the mentioned trajectory of actions (action 1, then 2, then 3, then 4). 
\end{proof}

\subsection{Proof of Theorem \ref{thm:alg1bad}}

\begin{proof}[Proof of Theorem \ref{thm:alg1bad}]
We adapt an example of \cite{braverman2018selling}. We describe a Bayesian game $G$ with $M = N = C = 2$. We will interpret this game as a game where the optimizer is trying to sell an item to a learner whose value is specified by the context $c$. The optimizer can set one of two prices for the item ($0$ or $1$), the learner's value for the item is either $1/4$ or $1/2$ (depending on the value of $C$), and the learner must choose whether to buy or not buy the item (without seeing the price). Formally, we have

\begin{eqnarray*}
u_{L}(i, j, c) &=& \left(\frac{c}{4} - i\right)j \\
u_{O}(i, 1, c) &=& ij,
\end{eqnarray*}

\noindent
where $i \in \{0, 1\}, j\in \{0, 1\}, c \in [2]$, and let $\cD$ be the uniform distribution over the two contexts. It is straightforward to verify that $\Val(G) = 1/4$, and this is achievable if the optimizer plays $\alpha = (3/4, 1/4)$ (i.e., the optimizer sets a price of $1/4$ for the item).

We now show that an optimizer can get $T/4 + \Omega(T)$ when playing against a learner running Algorithm \ref{alg:alg1} for $T$ steps. As in the description of the algorithm, let $\beta^{t}: [C] \rightarrow \Delta([N])$ be the strategy the learner plays in round $t$. 

Note that since $N=2$, low-swap-regret over $N$ actions is equivalent to low-regret over $N$ actions, so we can just assume that both of the sub-algorithms $\cA$ are a low-regret algorithm such as Hedge. The only property of the learning algorithm that we will need is the following: assume $R^{t}(c)_0$ is the total reward of action $0$ up until round $t$ in context $c$ and that $R^{t}(c)_1$ is the total reward of action $1$ up until round $t$ in context $c$. Then there is a sublinear function $r(T) = o(T)$ such that if $R^{t}(c)_1 - R^{t}(c)_0 > r(t)$, then $\beta^{t}(c)_{0} < o(T)$.

Now consider the following strategy for the optimizer: the optimizer will play $0$ for the first $T/2$ rounds and $1$ for the last $T/2$ rounds. Let us consider the values of $R^{t}(c)_0$ and $R^{t}(c)_1$. When $j=0$, the learner does not buy the item and $u_{L} = u_{O} = 0$, so $R^{t}(c)_{0} = 0$. On the other hand $R^{t}(c)_1 = ct/4$ for $t \in [1, T/2]$ and $R^{t}(c)_1 = ct/4 - (t - T/2)$ for $t \in [1, T/4]$. Note that when $c = 2$, $R^{t}(c) \geq 0$ for all $t$ (and in fact $R^{t}(c) \geq o(T)$ for almost all $t$), so $\beta^{t}(c)_{1} \geq 1-o(T)$ for almost all $t$. It follows that the optimizer receives utility $(1/2) \cdot (T/2) \cdot 1 - o(T) = T/4 - o(T)$ from the context $c=2$.

On the other hand, when $c = 1$, $R^{t}(c) \geq 0$ for all $1 \leq t \leq 3T/4$. A similar argument shows that in this case, the optimizer receives utility $(1/2) \cdot (T/4) = T/8 - o(T)$ from this context. Overall, the optimizer receives utility $3T/8 - o(T) = T/4 + \Omega(T)$. 
\end{proof}

\subsection{Proof of Lemma \ref{lem:solvable}}

\begin{proof}[Proof of Lemma \ref{lem:solvable}]
Fix a $\beta: [C] \rightarrow \Delta([N])$, and consider the function $\beta': [C] \rightarrow \mathbb{R}^n$ defined via:

$$\beta'(c)_{j} = \sum_{j'=1}^{N}\beta(c)_{j'}\left( \gamma_{c, j', j} + \sum_{c'=1}^{C}\gamma_{c, j, N+c'}\beta(c')_{j}\right).$$

We will show that $\beta'(c)$ is also an element of $[C] \rightarrow \Delta([N])$. Note that since both $\beta(c)$ and $\gamma_{c, j}$ are distributions, from the above equation we can observe that $\beta'(c)_{j} \in [0, 1]$ for all $j$. It thus suffices to check that $\sum_{j}\beta'(c)_{j} = 1$ for each $c \in [C]$. We perform this computation:

\begin{eqnarray*}
\sum_{j=1}^{N}\beta'(c)_{j} &=& \sum_{j=1}^{N}\sum_{j'=1}^{N}\beta(c)_{j'}\left( \gamma_{c, j', j} + \sum_{c'=1}^{C}\gamma_{c, j', N+c'}\beta(c')_{j}\right) \\
&=& \sum_{j=1}^{N}\sum_{j'=1}^{N}\beta(c)_{j'} \gamma_{c, j', j} + \sum_{j=1}^{N}\sum_{j'=1}^{N}\sum_{c'=1}^{C}\beta(c)_{j'}\gamma_{c, j', N+c'}\beta(c')_{j} \\
&=& \sum_{j=1}^{N}\sum_{j'=1}^{N}\beta(c)_{j'} \gamma_{c, j', j} + \sum_{j'=1}^{N}\sum_{c'=1}^{C}\beta(c)_{j'}\gamma_{c, j', N+c'}\sum_{j=1}^{N}\beta(c')_{j}\\
&=& \sum_{j=1}^{N}\sum_{j'=1}^{N}\beta(c)_{j'} \gamma_{c, j', j} + \sum_{j'=1}^{N}\sum_{c'=1}^{C}\beta(c)_{j'}\gamma_{c, j', N+c'}\\
&=& \sum_{j'=1}^{N}\beta(c)_{j'} \sum_{k=1}^{N+C}\gamma_{c, j', k}\\
&=& \sum_{j'=1}^{N}\beta(c)_{j'} = 1.
\end{eqnarray*}

Now, since this mapping from $\beta$ to $\beta'$ is a continuous mapping from $\Delta([N])^{C}$ to $\Delta([N])^C$, by Brouwer's fixed point theorem, there must exist an element $\beta \in \Delta([N])^C$ which is fixed by this mapping (and therefore satisfies the equation in the theorem statement).
\end{proof}

\subsection{Proof of Theorem \ref{thm:alg2pbound}}

\begin{proof}[Proof of Theorem \ref{thm:alg2pbound}]
We will show that the average strategy profile of the optimizer and learner (over all $T$ rounds) is approximately a correlated equilibrium of $G$, from which the conclusion will follow.

We begin by showing that in this average strategy profile, the learner cannot benefit much by ``misreporting'' their type, i.e.

\begin{equation}\label{eq:des1}
\sum_{t=1}^{T} u_{L}(\alpha^{t}, \beta^{t}(c), c) \geq \sum_{t=1}^{T} u_{L}(\alpha^{t}, \beta^{t}(c'), c) - o(T).\end{equation}

To show this, note that the external regret guarantees of algorithm $\cA_{c, j}$ ensure that for any $k \in [N+C]$,

\begin{equation}\label{eq:lowreg}
\sum_{t=1}^{T} \langle r_{c, j}^{t}, \gamma_{c, j}^{t} \rangle \geq \sum_{t=1}^{T} r^{t}_{c, j, k} - o(T).
\end{equation}

\noindent
Let us pick $k = N + c'$. Then the above inequality becomes 

\begin{equation}\label{eq:ineq1}
    \sum_{t=1}^{T} \langle r_{c, j}^{t}, \gamma_{c, j}^{t} \rangle \geq \sum_{t=1}^{T} \beta^{t}(c)_{j} \cdot u_{L}(\alpha^{t}, \beta^{t}(c'), c) - o(T).
\end{equation}

On the other hand, note that

\begin{equation} 
\langle r_{c, j}^{t}, \gamma_{c, j}^{t} \rangle = \beta^{t}(c)_j \left(\sum_{j'=1}^{N}\gamma_{c, j, j'} u_{L}(\alpha, j', c) + \sum_{c'=1}^{C}\gamma_{c, j, N+c'} u_{L}(\alpha, \beta^{t}(c'), c) \right) = \beta^{t}(c)_{j} u_{L}(\alpha, \beta^{t}(c), c),
\end{equation}

\noindent
where the last equality follows as a consequence of \eqref{eq:markov}. Substituting this into \eqref{eq:ineq1}, we have that:

\begin{equation}\label{eq:ineq2}
    \sum_{t=1}^{T} \beta^{t}(c)_{j} \cdot u_{L}(\alpha^{t}, \beta^{t}(c), c) \geq \sum_{t=1}^{T} \beta^{t}(c)_{j} \cdot u_{L}(\alpha^{t}, \beta^{t}(c'), c) - o(T)
\end{equation}

Summing \eqref{eq:ineq2} over all $j \in [N]$ (and using the fact that $\sum_{j}\beta^{t}(c)_{j} = 1$), we obtain our desired inequality \eqref{eq:des1}.

We now show that in the average strategy profile, the learner cannot benefit much by applying a swap function to their action. Specifically, let $\pi: [N] \rightarrow [N]$ be an arbitrary swap function. We will show that

\begin{equation}\label{eq:des2}
\sum_{t=1}^{T} u_{L}(\alpha^{t}, \beta^{t}(c), c) \geq \sum_{t=1}^{T} u_{L}(\alpha^{t}, \pi(\beta^{t}(c)), c) - o(T).\end{equation}

Here for $\beta \in \Delta([N])$, we write $\pi(\beta)$ to denote the element of $\Delta([N])$ which satisfies $\pi(\beta)_{j} = \sum_{j' \in \pi^{-1}(j)} \beta_{j'}$ (i.e., we can sample from $\pi(\beta)$ by first sampling $j'$ from $\beta$ and then playing $\pi(j')$).

As before, we will start from the external regret guarantee \eqref{eq:lowreg} of the individual algorithm $\cA_{c, j}$. If we fix $k = \pi(j)$ this time, and apply the same logic as before, we obtain the inequality

\begin{equation}\label{eq:ineq3}
    \sum_{t=1}^{T} \beta^{t}(c)_{j} \cdot u_{L}(\alpha^{t}, \beta^{t}(c), c) \geq \sum_{t=1}^{T} \beta^{t}(c)_{j} \cdot u_{L}(\alpha^{t}, \pi(j), c) - o(T).
\end{equation}

\noindent
But now, note that

\begin{equation}
    \sum_{j=1}^{N} \beta^{t}(c)_{j} \cdot u_{L}(\alpha^{t}, \pi(j), c) = u_{L}(\alpha^{t}, \pi(\beta^{t}(c)), c).
\end{equation}

It follows that by summing \eqref{eq:ineq3} over all $j \in [N]$ that we obtain our desired inequality. 

Now, consider the set of $C$ distributions $\cF'_{c}$ over $[M] \times [N]$ given by

$$\Pr_{\cF'_{c}}(i, j) = \frac{1}{T}\sum_{i=1}^{M}\sum_{j=1}^{N} \alpha^{t}_{i}\beta^{t}(c)_{j}.$$

In words, the collection of distributions $\cF'_{c}$ record the average strategy profile played by the optimizer and learner over the $T$ rounds. Now, \eqref{eq:des1} and \eqref{eq:des2} imply that for each $T$, there exists a function $\eps(T) = o(1)$ such that $\cF'_{c}$ form an $\eps$-approximate correlated equilibrium of $G$ (where an $\eps$-approximate equilibrium satisfies the inequalities in the definition of a correlated equilibrium up to a slack of $\eps$). Note that if we let $\CorrVal(G, \eps)$ be the maximum value for the optimizer over all $\eps$-correlated equilibria, then the optimizer will receive a reward of at most $\CorrVal(G, \eps(T))T$ against this optimizer.

But now, note that $\CorrVal(G, \eps)$ converges to $\CorrVal(G)$ as $\eps \rightarrow 0$. In particular, this means that $\CorrVal(G, \eps(T)) - \CorrVal(G) = o(1)$, and therefore the optimizer will receive a reward of at most $\CorrVal(G)T + o(T)$. 
\end{proof}

\subsection{Proof of Theorem~\ref{thm:apx-hardness}}
\begin{proof}[Proof of Theorem~\ref{thm:apx-hardness}]
We adapt a proof of \cite{guruganesh2021contracts} which shows hardness of computing optimal contracts in principal agent problems with types (``adverse selection''). We will reduce to bounded-degree dominating-set, which is known to be an APX-hard problem \cite{chlebik2008approximation}. Let $H$ be a graph with $V$ vertices, each with maximum degree at most $3$. We will construct from $H$ a Bayesian game $G$ with $M = V+1$ actions for the optimizer, $N = 5$ actions for the learner, and $C = 2N$ contexts. 

For each vertex $1 \leq v \leq V$, let $\nbr(v, 0) = v$, and let $\nbr(v, 1)$, $\nbr(v, 2)$, and $\nbr(v, 3)$ be the three neighbors of $v$ in $H$, arbitrarily ordered (if a vertex $v$ has fewer than $3$ neighbors, let $\nbr(v, j) = -1$ if $j > \deg(v)$). We will label the $M = V+1$ actions for the optimizer $1, 2, 3, \dots, V$ and $\emptyset$. We will label the $N = 5$ actions for the learner $0$, $1$, $2$, $3$, and $\bself$. Finally, we will label the $C = 2V$ types $1, 2, \dots, V$, and $\overline{1}, \overline{2}, \dots, \overline{V}$.

We now describe the payoffs of $G$. We first restrict our attention to contexts $c = v$ for some $v \in [V]$. Intuitively, for these contexts the game works as follows. If the learner plays $\bself$, then both the learner and optimizer receives nothing. Otherwise, if the learner plays action $j \in \{0, 1, 2, 3\}$ and the optimizer plays action $i$, the learner loses $1/(2V)$ but receives a payment of $1$ from the optimizer if $i = \nbr(v, j)$. On the other hand, the optimizer gains $1/V$ if $\nbr(v, j)$ exists, but loses $1$ to the learner if $i = \nbr(c, j)$. Formally, we have (for $v \in V$)
\begin{eqnarray*}
u_{L}(i, j, v) &=& \Ind(i = \nbr(v, j)) - \frac{1}{2V} \\
u_{O}(i, j, v) &=& \frac{1}{V}\Ind(\nbr(v, j) \neq -1) - \Ind(i = \nbr(v, j))\\
u_{L}(i, \bself, v) &=& u_{O}(i, \bself, v) = 0 \\
\end{eqnarray*}

Now let us consider a context $c = \overline{v}$ for some $v \in [V]$. In this case, if the learner plays any action $j \in \{0, 1, 2, 3\}$, the learner receives $-1/(2V)$ and the optimizer receives $0$. On the other hand, if the learner plays $\bself$ then they receive a payment of $1$ from the optimizer if $i = v$, and the optimizer receives a payoff of $1/V$ but must pay the learner $1$ if $i=v$. Formally:

\begin{eqnarray*}
u_{L}(i, \bself, \overline{v}) &=& \Ind(i = v)  \\
u_{O}(i, \bself, \overline{v}) &=& \frac{1}{V} - \Ind(i = v)\\
u_{L}(i, j, \overline{v}) &=& -\frac{1}{2V} \\
u_{O}(i, j, \overline{v}) &=& 0 \\
\end{eqnarray*}

\noindent
Finally, we assume the distribution $\cD$ is uniform over contexts, i.e. $p_{c} = 1/2V$ for all contexts $c$.

Let us analyze $G$ as a Bayesian Stackelberg game. First note that regardless of what mixed strategy $\alpha$ the optimizer plays, when the context is $\overline{v}$ the learner's best response is to play $\bself$: this guarantees them a non-negative utility, whereas any other action they play gives them a utility of $-1/2V$. These $\overline{v}$ types cost the optimizer whenever the optimizer allocates weight to an action $i \in [V]$ instead of to $\emptyset$. In particular, the optimizer loses a total of $(1 - \alpha_{\emptyset})$ utility from these types combined.

On the other hand, assume the type $c = v$ for some $v \in [V]$. Then if there exists an action $j$ such that the optimizer places weight at least $1/(2V)$ on $\nbr(v, j)$ (i.e., $\alpha_{\nbr(v, j)} \geq 1/(2V)$), some such action will be the learner's best response: this action guarantees the learner a positive payoff, whereas the payoff for all other actions is at most zero. In this case the optimizer can get a payoff of up to $1/V - 1/(2V) = 1/(2V)$. On the other hand, if no such action exists, the learner should just play $\bself$; this gives the learner zero utility, whereas all other actions will earn the learn negative utility. This case leads to the optimizer earning $0$ utility.

From these observations, the first thing we can notice is that for each $i \in [V]$, the optimizer should allocate either weight $0$ or $1/(2V)$ to action $i$. In particular, the optimizer never loses utility by decreasing the weight of an action from above $1/(2V)$ down to $1/(2V)$, or from less than $1/(2V)$ all the way down to $0$. From now on, let's assume that $\alpha_{i} \in \{0, 1/(2V)\}$ for all $i \in [V]$.

Let $S$ be the subset of $[V]$ containing the vertices $i$ where $\alpha_{i} = 1/(2V)$. Let $\nbr(S)$ equal the set of vertices $i' \in [V]$ such that $\nbr(i', j) \in S$ for some $j$; in other words, $\nbr(S)$ is the set of vertices dominated by $S$. Then we claim that the value of the optimizer when playing this mixed strategy is equal to:

$$\frac{|\nbr(S)| - |S|}{4V^2}.$$

This is since: i) the optimizer gains utility $1/(2V)$ from each type $i \in \nbr(S)$, ii) the optimizer loses utility $1/(2V)$ from each type $\overline{i}$ with $i \in S$, and iii) each type has an equal probability of $1/(2V)$ of occurring.

Now, we claim that $\max_{S \subseteq [V]} |\nbr(S)| - |S| = V - D$, where $D$ is the size of the minimal dominating set. To see this, note that while $\nbr(S) \neq [V]$, one can weakly monotonically increase the value of $|\nbr(S)| - |S|$ by adding an element of $[V] \setminus \nbr(S)$ to $S$ (this increases $|S|$ by $1$ and $|\nbr(S)|$ by at least $1$. On the other hand, if $\nbr(S) = [V]$, then $S$ is a dominating set so $\min |S| = D$ by definition. It follows that

$$\Val(G) = \frac{V - D}{4V^2}.$$

By the results of \cite{chlebik2008approximation} (Theorem 6), there exist graphs with degree at most $3$ where it is NP-hard to distinguish decide between the cases $D \geq 0.2879V$ and $D \leq 0.2872V$. For the games generated from these graphs, it is likewise hard to distinguish whether $\Val(G) \leq 0.1781/V$ or whether $\Val(G) \geq 0.1782/V$. It is therefore NP-hard to compute $\Val(G)$ (or the associated Stackelberg strategy) to within a factor of $(1-\eps)$ for $\eps = 1/1782$.

\end{proof}

\end{document}